\documentclass{article}

\usepackage[accepted]{icml2024}

\usepackage[utf8]{inputenc} 
\usepackage[T1]{fontenc}    
\usepackage{url}            
\usepackage{booktabs}       
\usepackage{amsfonts}       
\usepackage{nicefrac}       
\usepackage{microtype}      
\usepackage{wrapfig}
\usepackage{subcaption}
\usepackage[algo2e]{algorithm2e}

\usepackage{amssymb,amsmath}
\usepackage{amsthm}
\usepackage{mathrsfs}
\usepackage{thmtools, thm-restate}
\usepackage[mathscr]{eucal} 
\usepackage{natbib}
\usepackage{bbm}
\usepackage{arydshln}
\usepackage{enumitem}
\usepackage{cancel}
\usepackage{graphicx}
\usepackage{mdframed}
\usepackage{pgfplots} 
\usepackage{changepage}

\usepackage[tickmarkheight=.5em,textwidth=\marginparwidth,textsize=small, color=green!40]{todonotes}
\setlength{\marginparwidth}{1.5cm}

\theoremstyle{plain}
\newtheorem{theorem}{Theorem}[section]

\newtheorem{lemma}[theorem]{Lemma}

\theoremstyle{definition}

\theoremstyle{remark}

\pgfplotsset{compat=1.18}
\definecolor{mydarkblue}{rgb}{0,0.08,0.45}
\definecolor{mydarkgreen}{rgb}{0,0.45,0.08}

\usepackage[colorlinks=true,
    linkcolor=mydarkblue,
    citecolor=mydarkblue,
    filecolor=mydarkblue,
    urlcolor=mydarkblue,
    pagebackref=true]{hyperref} 
\usepackage[capitalize]{cleveref}
\renewcommand*\backref[1]{\ifx#1\relax \else (cited on #1) \fi}

\mdfdefinestyle{mdframedthmbox}{%
	leftmargin=.0\textwidth,
	rightmargin=.0\textwidth,%
	innertopmargin=0.75em,
	innerleftmargin=.5em,
	innerrightmargin=.5em,
}

\setlength{\belowdisplayskip}{0.5pt} \setlength{\belowdisplayshortskip}{0.5pt}
\setlength{\textfloatsep}{0pt}
\setlength{\abovedisplayskip}{0.5pt} \setlength{\abovedisplayshortskip}{0.5pt}
\setlength{\intextsep}{1pt}

\usepackage{float}

\crefname{lemmanew}{Lemma}{Lemmas}

\newcommand{\inner}[2]{\langle #1, #2 \rangle}

\newcommand{\E}{\mathbb{E}}

\newcommand{\margin}{\chi}

\def \nus/{\texttt{NUS}}




\newcommand{\nleft}{\mathclose\bgroup\left}
\newcommand{\nright}{\aftergroup\egroup\right}
















\def\1{\bm{1}}









\DeclareMathAlphabet{\mathsfit}{\encodingdefault}{\sfdefault}{m}{sl}
\SetMathAlphabet{\mathsfit}{bold}{\encodingdefault}{\sfdefault}{bx}{n}


\def\cC{{\mathcal{C}}}
\def\cD{{\mathcal{D}}}

\def\cF{{\mathcal{F}}}










\newcommand{\R}{\mathbb{R}}



\newcommand{\norm}[1]{\left\|#1\right\|}
\newcommand{\normsq}[1]{\left\|#1\right\|^{2}}






\newcommand{\tightsub}[1]{{\kern -.1em \raise-.1em\hbox{\tiny$#1$}}{}}

\icmltitlerunning{{From Inverse Optimization to Feasibility to ERM}}

\begin{document}

\twocolumn[
\icmltitle{{From Inverse Optimization to Feasibility to ERM}}



\icmlsetsymbol{equal}{*}

\begin{icmlauthorlist}
\icmlauthor{Saurabh Mishra}{sfu}
\icmlauthor{Anant Raj}{inria}
\icmlauthor{Sharan Vaswani}{sfu}

\end{icmlauthorlist}
\icmlaffiliation{sfu}{Simon Fraser University}
\icmlaffiliation{inria}{SIERRA Project Team (Inria), Coordinated Science Laboratory (CSL), UIUC}

\icmlcorrespondingauthor{Saurabh Mishra}{skm24@sfu.ca}
\icmlcorrespondingauthor{Sharan Vaswani}{vaswani.sharan@gmail.com}

\icmlkeywords{Inverse Optimization, Predict and Optimize, Convex Feasibility, Empirical Risk Minimization}
\vskip 0.3in ]


\printAffiliationsAndNotice{} 
\begin{abstract}
Inverse optimization involves inferring unknown parameters of an optimization problem from known solutions and is widely used in fields such as transportation, power systems, and healthcare. We study the \emph{contextual inverse optimization} setting that utilizes additional contextual information to better predict the unknown problem parameters. We focus on contextual inverse linear programming (\texttt{CILP}), addressing the challenges posed by the non-differentiable nature of LPs. For a linear prediction model, we reduce \texttt{CILP} to a convex feasibility problem, allowing the use of standard algorithms such as alternating projections. The resulting algorithm for \texttt{CILP} is equipped with a linear convergence guarantee without additional assumptions such as degeneracy or interpolation. Next, we reduce \texttt{CILP} to empirical risk minimization (ERM) on a smooth, convex loss that satisfies the Polyak-Lojasiewicz condition. This reduction enables the use of scalable first-order optimization methods to solve large non-convex problems while maintaining theoretical guarantees in the convex setting. Subsequently, we use the reduction to ERM to quantify the generalization performance of the proposed algorithm on previously unseen instances. Finally, we experimentally validate our approach on synthetic and real-world problems, and demonstrate improved performance compared to existing methods.

\end{abstract}

\vspace{-4ex}
\section{Introduction}
\label{sec:introduction}

Inverse optimization~\citep{heuberger2004inverse} is the reverse of standard optimization and uses a known output (decision) of an optimization problem to infer the unknown problem parameters. For example, in the context of linear programming (LP), inverse optimization uses the optimal solutions to the LP in order to learn the coefficients (costs) that can produce these solutions. Inverse optimization has found applications in transportation~\citep{bertsimas2015data}, power systems~\citep{birge2017inverse} and healthcare~\citep{chan2022inverse} (refer to~\citet{chan2023inverse} for a recent survey). 

We focus on integrating additional contextual information into the inverse optimization framework. In particular, we leverage historical data and a machine learning (ML) model to predict the (unknown) optimization problem parameters that can render (known) optimal decisions. This setting is commonly referred to as contextual inverse optimization (CIO)~\citep{besbes2023contextual,sun2023maximum} or data-driven inverse optimization~\citep{mohajerin2018data}. CIO requires a combination of prediction and optimization and has found applications in optimal transport and vehicle routing~\citep{li2022overview}, financial modeling \citep{cornuejols2006optimization}, power systems~\citep{bansal2005optimization,li2018optimal}, healthcare~\citep{angalakudati2014business}, circuit design~\citep{boyd2001optimal}, robotics~\cite{raja2012optimal}. Some use-cases for CIO are as follows.

\begin{figure}[t]
\begin{center}
\centerline{\includegraphics[width=\columnwidth]{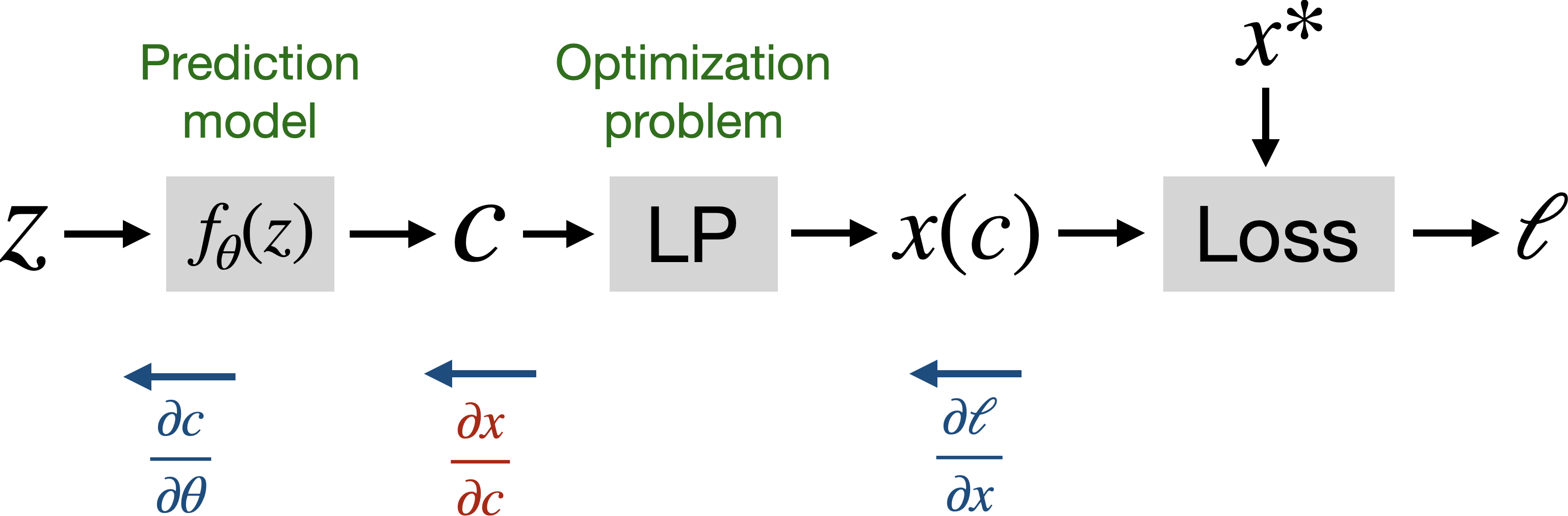}}
\caption{CIO framework: model $f_\theta$ takes input $z$ and predicts the cost vector $c = f_\theta(z)$. This cost vector is the input of an optimization procedure that outputs decision $x(c)$. Given the optimal decision $x^*$, the objective is to learn the model parameters such that the predicted decision $x(c)$ is close to the optimal decision. To train the model in an end-to-end fashion, the key challenge is to compute the gradient of $c$ w.r.t decision $x(c)$ (shown in red in the figure). }
\label{fig:intro}
\vspace{-1ex}
\end{center}
\end{figure}

\emph{Example 1}: Energy-cost aware scheduling~\citep{wahdany2023more}, which involves using weather data to forecast wind-energy generation and hence energy prices (prediction). These predictions can be used to schedule jobs (optimization) to minimize energy costs. For the CIO, the contextual information corresponds to weather data, and the solutions (decisions) correspond to past schedules.  

\emph{Example 2}: Shortest path planning~\citep{guyomarch2017warcraft}, which involves predicting the time taken through different routes or terrain (prediction). These predictions can be used to determine the shortest path between two locations (optimization). For CIO, the contextual information corresponds to images or features of the terrain, and the decisions correspond to known shortest paths for pairs of locations. 

\emph{Example 3}: Inverse reinforcement learning (IRL)~\citep{ng2000algorithms}, which involves learning the underlying reward function in a Markov decision process (MDP) from the observed behaviour of a human expert. The learned reward function can be used to infer a good policy for an artificial agent. Assuming that the human expert acts in order to maximize the implicit reward functions, for the CIO, the context corresponds to features of the MDP, and decisions correspond to the observed expert behaviour. 

\emph{Example 4}: In rational choice theory, a common way to model agents (e.g. users interacting with a recommendation system) is to assume that the (i) agent is rational and is making decisions to optimize some unknown implicit utility function and that (ii) the form (but not the parameters) of this utility function is known (e.g. whether it is linear or concave). For recommendation systems, the user's demographics and other metadata correspond to the context. In CIO, this context is used to predict the unknown parameters of the utility function, such that when it is maximized, it can explain the users' past purchases (corresponding to the known decisions)~\citep{wilder2019melding}. The estimated utility function can then be used to make better recommendations.


Since numerous combinatorial problems, including shortest path, max-flow, and perfect matching, can be cast as linear programs, we will mainly focus on cases where the optimization problem is a Linear Program (LP) (in~\cref{sec:extension-nonlinear-constraints}, we briefly consider more general non-linear problems). The examples of CIO presented earlier fall within the LP framework. For LPs, the key challenge of contextual inverse linear programming (\texttt{CILP}) lies in the non-differentiable nature of LPs. This limitation precludes the direct use of auto-differentiation techniques. To overcome this problem, we make the following contributions. 

\textbf{Contribution 1}: For a linear prediction model, we reduce \texttt{CILP} to a convex feasibility problem (\cref{sec:feasibility}). This reduction enables the use of standard algorithms such as alternating projections. Unlike existing work~\citep{sun2023maximum}, the resulting method (\cref{alg:revgrad}) guarantees linear convergence to the solution without additional assumptions such as degeneracy or interpolation. 

\textbf{Contribution 2}: To efficiently handle large-scale problems, we reduce the feasibility problem (and hence \texttt{CILP}) to empirical risk minimization (ERM) on a smooth, convex loss function satisfying the Polyak-Lojasiewicz condition~\citep{polyak1964gradient} (\cref{sec:erm}). This reduction allows us to employ scalable first-order optimization methods while retaining strong theoretical guarantees. 

\textbf{Contribution 3}: In~\cref{sec:generalization}, we argue about the shortcomings of the previous measures of performance for \texttt{CILP}, and propose a new sub-optimality metric. Subsequently, we use the reduction to ERM to quantify the performance of the proposed algorithm on previously unseen instances.



\textbf{Contribution 4}: In~\cref{sec:experiments}, we validate the effectiveness of our approach with experiments on synthetic shortest path and fractional knapsack problems~\citep{sun2023maximum}, and real-world Warcraft shortest path and MNIST perfect matching tasks~\citep{vlastelica2019differentiation}. Our empirical results demonstrate that the proposed algorithm results in improved performance compared to the prior work.  

\vspace{-2ex}
\section{Related Work}
\label{sec:related-work}
In this section, we review the related works, contrasting them with our proposed approach. 
 
\textbf{Inverse Optimization}:~\citet{iyengar2005inverse}, uses the Karush–Kuhn–Tucker (KKT) optimality conditions for the LP to define the feasible set of cost vectors. Similarly,~\citet{mohajerin2018data}, uses the Wasserstein metric to find a set of robust cost vectors by formulating an inverse optimization problem. However, they do not consider the contextual setting, so no learning is required. In contrast, we use the KKT conditions to train a prediction model, mapping contextual information to optimal decisions. More recently,~\citet{besbes2023contextual} consider solving CIO in both the online and offline settings. Their offline setting is similar to our problem formulation, but does not make any linearity or convexity assumptions. They derive bounds on the worst-case suboptimality for a specific mapping from features to cost vectors. However, it is unclear whether this mapping can be efficiently computed even in the special case of LPs. Furthermore,~\citet{besbes2023contextual} assume realizability i.e. there is no noise in the decisions and the model can perfectly fit (interpolate) the data. This further limits the practical utility of their framework. In contrast, we make no realizability assumptions and develop efficient algorithms for \texttt{CILP}.



\textbf{Using the reduced cost optimality condition}: ~\citet{sun2023maximum} proposed a method to use the reduced cost optimality conditions ~\cite{luenberger1984linear} for LPs. The method constructs a surrogate loss function that encourages the prediction to satisfy the reduced cost optimality conditions. The resulting method has theoretical convergence guarantees, assuming that the LPs are non-degenerate and that the model can interpolate the training data. Both of these are strong assumptions and are not necessarily satisfied in practice. In contrast, we use the KKT conditions that are equivalent to the reduced cost optimality conditions for non-degenerate LPs (see~\cref{app:kkt_equiv} for proof). Since the KKT conditions can also be used for degenerate LPs, our proposed framework provides theoretical guarantees without relying on this assumption. Moreover, our guarantees hold even without assuming interpolation. 

\textbf{Differentiating through LPs}: ~\citet{vlastelica2019differentiation} estimate the gradient ``through'' the LP by calculating the change in the decision by perturbing the prediction. However, it introduces additional hyper-parameters that are non-trivial to tune. Another common technique is to use the straight-through-estimator (ST)~\citep{sahoo2022backpropagation}. Given a set of predictions from the model, the ST method uses the LP to estimate the decisions. However,  it does not consider the LP (treats the corresponding Jacobian as an identity matrix) while back-propagating the gradient from the decisions to the model parameters. Though successful in practice, this method is not theoretically principled.  The method in~\citet{berthet2020learning} computes expected gradients by perturbing the prediction target in different directions. While this method accurately models the gradient, it is not practically feasible because of the computational cost of solving LPs multiple times for each update to the model. One advantage of these techniques is their ``black-box'' nature, meaning that they only rely on the outputs from an LP, thus allowing the use of faster problem-specific solvers. In contrast, our work leverages LP properties (and some other problems described in~\cref{sec:extension-nonlinear-constraints}), allowing us to develop a more efficient and principled approach.


\textbf{Implicit Differentiation}: The methods in~\citep{amos2017optnet, amos2019differentiable} focuses on (strongly)-convex optimization problems. It calculates the gradient through such problems by differentiating through its optimality (KKT) conditions. However, since the solutions of LPs are located at the corners of the feasible polytope, this method will yield zero gradients for LPs. To address this, QPTL ~\citep{wilder2019melding} add a quadratic regularization to the LP, thus relaxing the problem to a non-linear strongly-convex quadratic program (QP) and then use the technique in~\citet{amos2017optnet}. Similarly,~\citet{cameron2022perils} relax Mixed Integer Programs (MIP) by using a log-barrier regularization followed by the use of the techniques in~\citet{amos2017optnet}. These approaches suffer from two notable limitations: (i) they introduce additional hyper-parameters (the regularization strength), and (ii) they are only guaranteed to converge in the vicinity of the optimal solution (because of the bias introduced by the regularization). While our proposed framework also uses KKT conditions, it ensures convergence to the optimal solution without introducing additional hyper-parameters.

\textbf{Predict and Optimize}: The CIO problem is related to the \emph{predict and optimize} (PO) framework~\citep{elmachtoub2022smart}. In contrast to the CIO, PO requires the knowledge of ground-truth costs. Our work does not assume access to this additional information, but we note that the proposed algorithms can be directly used in the PO setting.    

In the next section, we formally formulate the problem and highlight the technical challenges. 





\vspace{-2ex}
\section{Problem Formulation}
\label{sec:problem}
We consider the optimization procedure to be a linear program (LP). Without loss of generality, we assume the standard form of the LP and define $\hat{x}(c)$ as the solution to the LP with cost-vector $c \in \mathbb{R}^{m}$, 
\begin{align*}
\hat{x}(c) := \arg \min_{x} \langle c, x \rangle \text{ s.t } Ax=b,\;x\geq0 \,,
\end{align*}
where $A \in \mathbb{R}^{n \times m}$ and $b \in \mathbb{R}^{n}$. The \texttt{CILP} problem consists of a training dataset $\cD = \{z_i, x_i^*\}_{i=1}^N$ where $z_i \in \mathbb{R}^{1 \times d}$ is the input ($Z \in \mathbb{R}^{N \times d}$ is the corresponding feature matrix) and $x_i^* \in \mathbb{R}^m$ is the corresponding optimal decision. We assume that the LP parameters ($A$, $b$) encoding the constraints are known, whereas the cost vector $c$ is unknown and will be predicted using the data. 


{\textit{Example}: In the context of the shortest path problem (Example 2 in~\cref{sec:introduction}), consider an arbitrary $x, c \in \mathbb{R}^m$ in the dataset; for all \(j \in [m]\), $x_j^* \in \{0,1\}$ variables denote whether an edge is included in the shortest path and the weight of each edge is represented by the cost $c_j$. To ensure a valid path from the start vertex \(s\) to the target vertex \(t\), the ``flow'' constraints are encoded via \(A\) and \(b\). These constraints ensure that every vertex, except \(s\) and \(t\), maintains an equal number of incoming and outgoing edges. Vertex \(s\) is constrained to have exactly one outgoing edge, and vertex \(t\) has precisely one incoming edge. 

When using a model $f_\theta$ with parameters $\theta$ to predict the cost-vector, we define $\hat{x}(\hat{c})$ as: 
\begin{align*}   
\hat{x}(\hat{c}) := \arg \min_{x} \langle \hat{c}, x \rangle \text{ s.t } Ax=b,\;x\geq0, \; \hat{c} = f_{\theta}(z) \,.
\end{align*}
Given the dataset $\cD$ and knowledge of $(A, b)$, the \texttt{CILP} objective is to learn $\theta$ s.t. $\hat{x}(f_\theta(z_i)) \approx x^*_i$ for all $i \in [N]$. 

\vspace{-1ex}
\subsection{Challenge in Gradient Estimation}
\label{sec:challenges-grad-estimation}
To gain some intuition as to why the typical end-to-end learning approach via auto-differentiation~\cite{NEURIPS2019_9015} will not work for \texttt{CILP}, consider using the squared loss to quantify the discrepancy between $\hat{x}(f_\theta(z_i))$ and $x^*_i$, i.e. $\ell(\theta) := \frac{1}{2} \sum_{i=1}^N \ \| \hat{x}(f_\theta(z_i)) - x^*_i\|^2$. Using the chain rule to compute the gradient with respect to $\theta$, we get that $\frac{\partial l}{\partial \theta} = \frac{\partial l}{\partial x} \, \frac{\partial x}{\partial c} \, \frac{\partial c}{\partial \theta}$. The first and last terms can be easily calculated. However, for an LP, the decision $x$ is piece-wise constant with respect to $c$, and $\frac{\partial x}{\partial c}$ is either $0$ or undefined. 

Consequently, in the next section, we aim to develop an algorithm that does not rely on directly calculating $\frac{\partial x}{\partial c}$. 

\vspace{-2ex}
\section{Reduction to Convex Feasibility}
\label{sec:feasibility}
For a linear model, we reduce the \texttt{CILP} problem to convex set feasibility (\cref{sec:feasibility-reduction}). In~\cref{sec:algorithm}, we use alternating projections onto convex sets (POCS) to solve the feasibility problem and completely instantiate~\cref{alg:revgrad}. In~\cref{sec:practical}, we describe some practical considerations when using~\cref{alg:revgrad}. In~\cref{sec:extension-nonlinear-constraints}, we describe how to extend these ideas to handle non-linear but convex objectives and constraints.

\vspace{-1ex}
\subsection{Reduction}
\label{sec:feasibility-reduction}
Recall that for an input $(z, x^*) \in \cD$, we aim to find a $c$ such that $\hat{x}(c) = x^*$. However, due to the non-uniqueness of the mapping from $x$ to $c$, there are potentially infinitely many values of $c$ that can yield $x^*$. We define $C$ to represent the set encompassing all such values of $c$. The set $C$ can be represented by exploiting the optimality conditions for the LP. KKT conditions \cite{kuhn1951nonlinear} give necessary and sufficient conditions for the optimality of the LP. If $x^*$ is the solution to the standard LP, then the KKT conditions can be written as follows:
 \begin{align*}
{\nu}^T A + \lambda - c=0 \,, x^* \cdot \lambda =0 \,, \lambda \geq 0 \,, Ax^*=b \,, x^* \geq 0
 \end{align*} 
where $\lambda \in \mathbb{R}_{+}^{m}, \nu \in \mathbb{R}^{n}$ are the dual variables, $x^*_i \lambda_i =  0$ (for all $i$) represents the complementary slackness condition and $Ax^*=b$, $x^* \geq 0$ represents the feasibility of $x^*$. At optimality, there exist dual variables $(\lambda, \nu)$ such that the tuple ($x^*, \lambda, \nu, c$) satisfies the KKT conditions. 
 
Since the KKT optimality conditions are both necessary and sufficient, given an optimal solution $x^*$, we can identify the set of cost vectors $c$ that satisfy these conditions. This enables us to define the convex set $C$ as:
\begin{align}
   C =\{c\, |\, \exists\, \lambda, \nu \, \text{ s.t. } & \nu^TA+\lambda-c = 0, \nonumber \\ & x^*\cdot \lambda =  0, \lambda \geq 0 \}
\label{eq:c-set-def}
\end{align}   
Note that we omit the condition $Ax^*=b$, $x^* \geq 0$ as it is satisfied by definition for the optimal solution $x^*$. For any $\lambda, \nu$, the set $C$ is affine and hence convex in $c$. 

We define $F$ as the set of cost vectors that are realizable by the linear model parameterized by $\theta \in \mathbb{R}^{d \times m}$. Formally, $F$ can be written as:
\begin{align}
    F = \{c \,| \exists \theta \text{ s.t. } c = z\theta \}.  
\label{eq:f-set-def}    
\end{align}
For a linear model, the set $F$ is linear and hence convex in $c$. The objective of \texttt{CILP} is to find a $c \in C$ (resulting in the optimal solution $x^*$) and is also realizable by the model, i.e. it lies in set $F$. Hence, we aim to find a $c$ that lies in the intersection $(C \cap F)$. Therefore, \texttt{CILP} is equivalent to a convex feasibility problem in this setting. 
\vspace{-2ex}
\subsection{Algorithm}
\label{sec:algorithm}
The commonly employed method for solving convex feasibility problems is the alternating projections (POCS) algorithm~\citep{von1949rings, bauschke1996projection}. The POCS algorithm alternatively projects a point onto the two sets. The algorithm is guaranteed to converge to a point in the intersection if the intersection is non-empty; otherwise, it converges to the closest point between the two sets~\citep{deutsch1984rate, bauschke1993convergence}. Moreover, the rate of convergence is linear in the number of POCS iterations. In order to use the POCS algorithm for the \texttt{CILP} problem, we require the projection of an arbitrary point $q \in \R^{m}$ onto the set $C$. This corresponds to solving the quadratic program (QP) as follows:
\begin{subequations}
\label{grad_inverse_lp_new}
\begin{align*}
\mathcal{P}_{C}(q) & := \underset{c}{\arg \min} ||c - q||^2_{2} \\
\text{subject to}\qquad & \nu^T A-c+ \lambda =0, \lambda \cdot x^* = 0,\lambda \geq 0
\tag{\ref{grad_inverse_lp_new}} 
\end{align*}
\end{subequations}
\cref{grad_inverse_lp_new} returns a point $\mathcal{P}_{C}(q)$, the Euclidean projection of $q$ onto $C$. For the projection of a point $q$ onto the set $F$, we require solving the following regression problem, 
\begin{align}
\hat{\theta} := \arg \min_{\theta} \frac{1}{2} \, || q - z\theta ||^{2} \quad \text{;} \quad \mathcal{P}_{F}(q) = z\hat{\theta}
\label{eq:proj-f}
\end{align}
\cref{eq:proj-f} returns a point $\mathcal{P}_{F}(q)$, the Euclidean projection of $q$ onto $F$. Hence, POCS consists of alternatively solving the optimization problems in~\cref{grad_inverse_lp_new} and~\cref{eq:proj-f}.
\vspace{1ex}
\begin{algorithm}[!h]
\caption{for \texttt{CILP}} 
\label{alg:revgrad}
\textbf{Input}: $A, b$, Training dataset $\mathcal{D} \equiv (z_i,x_i^*)_{i=1}^N$, Model $f_{\theta}$ \; \\
Initialize $\theta_1$\; \\
  \For {$t =1,2, ..,T$} {
    $\hat{c}_i = f_{\theta_t}(z_i) $, $\forall i \in [N]$ \;
    
    \For {$i = 1, 2, .., N$}{
        $q_i = \mathcal{P}_{C_i}(\hat{c}_i)$ by solving the optimization problem in ~\cref{grad_inverse_lp_new} 
        }
    $\theta_{t+1} = \arg \min_{\theta}  \frac{1}{2N} \sum_{i=1}^N|| q_i - f_{\theta}(z_i) ||^{2}$
  }
  \textbf{Output}: $\theta_{T+1}$
\end{algorithm}
\vspace{0.5ex}
In order to extend the above idea to $N$ training points, we will define sets $C_i$ analogous to~\cref{eq:c-set-def} for each $i \in [N]$. We define $\mathcal{C}:= C_1 \times C_2 \ldots \times C_N$ to be the Cartesian product of these sets. Hence, $\mathcal{C}$ consists of concatenated vectors 
$(c_1, c_2, \ldots ,c_N) \in \mathbb{R}^{Nm}$ where $c_i \in C_i$. Similarly, we define $\mathcal{F} := \{(c_1, c_2, \ldots, c_N) \vert \exists \theta \text{ s.t. } \forall i \in [N] \text{ , } c_i = z_i\theta \}$. Hence, the projection onto $\mathcal{C}$ corresponds to solving the QP in~\cref{grad_inverse_lp_new} for every point $i \in [N]$. Projecting an arbitrary point $\tilde{q} = (\tilde{q}_1, \tilde{q}_2, \ldots, \tilde{q}_N) \in \R^{Nm}$ onto $\mathcal{F}$ involves solving the regression problem, $\hat{\theta} := \arg \min_{\theta}  \frac{1}{2N} \sum_{i = 1}^{N} \normsq{\tilde{q}_i - z_i \theta}$. Since the Cartesian product of sets is convex, both $\mathcal{C}$ and $\mathcal{F}$ are convex and hence the resulting POCS algorithm will converge at a linear rate. 

Finally, we note that our algorithmic framework can handle a generic model $f_\theta$, though our theoretical results only hold for a linear model. The complete algorithm for a generic model $f_\theta$ is described in~\cref{alg:revgrad}. Next, we describe practical considerations when implementing the algorithm. 

\vspace{-1.5ex}
\subsection{Practical considerations: Margin}
\label{sec:practical}
\vspace{-0.5ex}
Recently,~\citet{sun2023maximum} have noted the benefits of using a margin with the LP optimality conditions~\citep{luenberger1984linear}. In~\cref{app:kkt_equiv}, we show how to modify their margin formulation for the KKT conditions, resulting in the modified set $C$ below:
\begin{multline}
C =\{c\,\, |\, \exists\, \lambda, \nu \, \text{ s.t. } \, \nu^T A-c+ \lambda =0,  \\ \lambda_i \, \mathcal{I}\{x^*_i \neq 0\}  = 0, \lambda_i \, \mathcal{I}\{x^*_i = 0\} \geq \margin\}
    \label{margin}
\end{multline}
There are two key motivations for adding the margin: (i) it ensures that the algorithm does not converge to a trivial solution corresponding to $c = 0$, (ii) it ensures that the algorithm will converge to the interior (rather than the boundary) of $C$ making it more robust to perturbations and improving the algorithm's generalization performance~\cite{el2019generalization} to previously unseen instances. We note that our framework and the resulting algorithm is not limited to LPs. Next, we describe how to extend these ideas to handle non-linear but convex objectives and constraints.  

\vspace{-2ex}
\subsection{Handling non-linear optimization problems}
\label{sec:extension-nonlinear-constraints}
Similar to the linear case, the KKT optimality conditions can be used to derive a convex set $C$ for a class of non-linear convex objectives and constraints~\cite{iyengar2005inverse}. As an example, we instantiate $C$ for a specific quadratic program (QP) below and defer the general case to~\cref{app:proofs}. 
\begin{align}
\hat{x}(c) := \arg \min_{x \geq 0} -\langle c, x \rangle + \frac{\gamma}{2} x^{T} Q x \text{ s.t } \sum_{i=1}^{m} x_i = 1 
\label{eq:qp-def}
\end{align}
\emph{Example}: For the portfolio optimization problem~\citep{fabozzi2008portfolio} common in econometrics, $x_j$ and $c_j$ in~\cref{eq:qp-def} represent the fraction of investment and the expected return for stock $j \in [m]$ respectively. The matrix $Q \in \mathbb{R}^{m \times m}$ represents the risk associated with selecting similar stocks, and $\gamma$ is the given risk tolerance. The task is to maximize the return while minimizing the risk, subject to simplex constraints. Given historical data and a model that can be used to predict the expected return and risk matrix, and the ``best'' portfolios in hindsight, the CIO problem is to infer the model parameters. In this case, the convex set $C$ consisting of $\chi := \{c^*, Q^*\}$ that satisfy the KKT conditions of the QP is given as: $C  =\{c, Q \, | \, \exists\, \lambda, \nu \, \text{ s.t. } \,  \nu \mathbf{1}_m + \lambda + c - \frac{\gamma}{2} (Q + Q^T)x^*=0,\, x^*\cdot \lambda =  0, \, \lambda \geq 0 \}$ where $\lambda \in \mathbb{R}_{+}^{m}, \,\nu \in \mathbb{R}^{1}, \,\mathbf{1}_m = (1, 1, \dots, 1)$.  The set $C$ is linear and therefore convex in $\{c, Q\}$. Consequently, when using a linear prediction model, the inverse problem for portfolio optimization can also be reduced to convex feasibility. 

For a general non-linear convex objective $\phi(x,\omega)$ where $\omega$ represents a general cost vector, set $C$ is convex in $\omega$ if $\frac{\partial {\phi(x,\omega)}}{\partial x}|_{x=x^*}$, the derivative of the objective function $\phi$ w.r.t $x$ evaluated at $x^*$ is convex in $\omega$. For instance, this condition also applies to semi-definite programs. Finally, we note that our framework is not limited to linear constraints, and can easily handle non-linear convex constraints. Please refer to~\cref{app:gen_kkt_inverse} for the derivation. 



\vspace{-2ex}
\subsection{Challenges for solving large-scale problems}
\label{sec:challenges-large}
The POCS approach described above requires computing the exact projection of point $q$ onto the set $\mathcal{F}$. For high-dimensional problems, computing these exact projections is computationally expensive. Moreover, for non-linear models such as neural networks, $\mathcal{F}$ is non-convex and the resulting projection is ill-defined. Additionally, computing the exact projection requires iterating through the entire dataset of $N$ points, which can be prohibitive for large datasets typical in practice. 

Consequently, in the next section, we reduce the problem to empirical risk minimization on an appropriate smooth, convex loss satisfying the PL condition. This reduction enables the use of computationally efficient (stochastic) first-order optimization algorithms common in the machine learning literature. 


\vspace{-2ex}
\section{Reduction to Empirical Risk Minimization}
\label{sec:erm}
For a linear model, in~\cref{sec:erm-reduction}, we reduce the feasibility problem to empirical risk minimization (ERM) on an appropriate smooth, convex loss satisfying the PL condition and prove that the preconditioned gradient method (with a specific preconditioner) on this loss is equivalent to the POCS approach of~\cref{alg:revgrad}. Subsequently, in~\cref{sec:first-order}, we consider using computationally efficient (stochastic) first-order methods for minimizing the loss functions. 

\vspace{-1.5ex}
\subsection{Reduction}
\label{sec:erm-reduction}
We define the loss function $h(\theta)$ as follows:
\begin{align}
h(\theta) := \frac{1}{2N} \sum_{i=1}^N \min_{q_i \in C_i} \| f_{\theta}(z_i) - q_i\|^2 \,,
\label{eq:train-loss}
\end{align}
where $C_i$ represents the set of feasible cost vectors (defined in~\cref{eq:c-set-def}) for data-point $i$. Hence, $h(\theta)$ represents the mean (across the data-points) of squared distances between the predicted cost vector $f_{\theta}(z_i)$ and the set $C_i$. In order to better interpret $h(\theta)$, consider a point $c_\theta = (c_1, c_2, \ldots, c_N) \in \R^{Nm}$ such that $c_i = f_\theta(z_i)$. Hence, $h(\theta) = \frac{d^{2}(c_\theta, \mathcal{C})}{N}$ where $d^{2}(w,\mathcal{W})$ is the squared Euclidean distance of point $w$ to the set $\mathcal{W}$. Since $c_\theta \in \mathcal{F}$, minimizing $h(\theta)$ is related to minimizing the distance between the sets $\mathcal{F}$ and $\mathcal{C}$. Formally, in~\cref{thm:conv_proof} (proved in~\cref{proof:conv_proof}), we can reduce the feasibility problem in~\cref{sec:feasibility} to minimizing $h(\theta)$. 

\begin{restatable}{proposition}{propconverge}
Point $\hat{c} := (c_1, c_2, \ldots, c_N)$ where $c_i = z_i \tilde{\theta}$ and $\tilde{\theta} \in \arg\min h(\theta)$ lies in the intersection $\mathcal{C} \cap \mathcal{F}$ if it exists, else $\hat{c} \in \mathcal{F}$ is the point closest to $\mathcal{C}$. 
\label{thm:conv_proof}
\end{restatable}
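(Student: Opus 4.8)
The plan is to reduce the statement about the minimizer of $h(\theta)$ to the known convergence properties of POCS between the convex sets $\mathcal{C}$ and $\mathcal{F}$. First I would reinterpret $h(\theta)$ geometrically: for any $\theta$, the vector $c_\theta := (z_1\theta, \ldots, z_N\theta)$ lies in $\mathcal{F}$, and conversely every point of $\mathcal{F}$ arises this way; moreover $h(\theta) = \frac{1}{N} d^2(c_\theta, \mathcal{C})$ since $\sum_i \min_{q_i \in C_i}\|z_i\theta - q_i\|^2 = d^2(c_\theta, \mathcal{C})$ by the product structure $\mathcal{C} = C_1 \times \cdots \times C_N$. Hence $\min_\theta h(\theta) = \frac{1}{N}\min_{c \in \mathcal{F}} d^2(c, \mathcal{C}) = \frac{1}{N} d^2(\mathcal{F}, \mathcal{C})$, the squared distance between the two convex sets. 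So $\tilde\theta \in \arg\min h$ exactly when $\hat c = c_{\tilde\theta} \in \mathcal{F}$ attains this infimum of the set-to-set distance.

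Next I would split into the two cases. If $\mathcal{C} \cap \mathcal{F} \neq \emptyset$, then $d^2(\mathcal{F}, \mathcal{C}) = 0$, so any minimizer $\tilde\theta$ gives $d(c_{\tilde\theta}, \mathcal{C}) = 0$; since $\mathcal{C}$ is closed (it is an intersection of finitely many affine equalities and closed halfspaces, hence closed — I should note this), $d(c_{\tilde\theta},\mathcal{C}) = 0$ forces $\hat c \in \mathcal{C}$, and also $\hat c \in \mathcal{F}$ by construction, so $\hat c \in \mathcal{C} \cap \mathcal{F}$. If the intersection is empty, then by definition $\hat c \in \mathcal{F}$ and $d(\hat c, \mathcal{C}) = \min_{c\in\mathcal{F}} d(c,\mathcal{C})$, which is precisely the assertion that $\hat c$ is the point of $\mathcal{F}$ closest to $\mathcal{C}$. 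In both cases one should confirm that the infimum defining $h$ and the outer minimum over $\theta$ are actually attained; this follows because the inner projection onto the closed convex $C_i$ is attained (nonemptiness of each $C_i$ holds since $x_i^*$ is an LP optimum, so KKT multipliers exist), and the outer problem $\min_\theta h(\theta)$ is a continuous function — indeed convex, being a composition of the convex function $c \mapsto d^2(c,\mathcal{C})$ with the linear map $\theta \mapsto c_\theta$ — whose minimum is attained because $d^2(\cdot, \mathcal{C})$ is coercive along directions not parallel to $\mathcal{C}$, or more simply because the value equals a fixed set-distance that POCS is already known (from \cref{sec:algorithm}) to converge to.

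The main obstacle, such as it is, is bookkeeping around attainment and closedness: one must verify that each $C_i$ is nonempty and closed so that the projection $\mathcal{P}_{C_i}$ is well-defined and single-valued, that $\mathcal{C}$ and $\mathcal{F}$ are closed convex subsets of $\mathbb{R}^{Nm}$, and that $\arg\min h(\theta)$ is nonempty (which could fail to be a single $\theta$ but always yields the same $\hat c$ up to the projection of $q$ onto $\mathcal{C}$ when the intersection is empty, by uniqueness of the nearest point in a closed convex set). Once these are in place, the statement is essentially a restatement of the defining property of the limit point of alternating projections between $\mathcal{F}$ and $\mathcal{C}$, so no further estimates are needed. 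I would present the argument in the order: (1) geometric rewriting of $h$; (2) closedness/convexity/nonemptiness of $\mathcal{C}$, $\mathcal{F}$; (3) the intersection-nonempty case; (4) the intersection-empty case.
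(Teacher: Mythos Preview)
Your proposal is correct and follows essentially the same approach as the paper: both hinge on the geometric rewriting $h(\theta) = \frac{1}{2N}\, d^2(c_\theta,\mathcal{C})$ and then observe that minimizing over $\theta$ is the same as minimizing $d(c,\mathcal{C})$ over $c\in\mathcal{F}$, from which the two cases (intersection nonempty / empty) follow immediately. If anything, your version is more careful than the paper's---you explicitly track closedness, nonemptiness of each $C_i$, and attainment, whereas the paper wraps the same observation in a short contradiction argument without addressing these points.
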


\vspace{-2ex}
\subsection{Properties of $h(\theta)$}
\label{sec:prop-h}
For a linear model where $f_\theta(z) = z\theta$, we show that $h(\theta)$ has desirable properties that allow it to be minimized efficiently. In the proposition below, we establish the convexity and smoothness of $h(\theta)$. 

\begin{restatable}{proposition}{propsmooth}
For a linear model $f_{\theta}(z) = z\theta$ parameterized by $\theta \in \R^{d \times m}$, assuming (without loss of generality) that $\forall i$, $\|z_i\|\leq 1$, $h(\theta)$ is a $1$-smooth convex function.
\label{thm:smooth-convex}
\end{restatable}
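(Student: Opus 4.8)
The plan is to build $h$ out of elementary pieces whose convexity and smoothness are standard. First I would record the key building block: for any nonempty closed convex set $W \subseteq \R^m$, the (halved) squared-distance function $\psi_W(c) := \tfrac12 \min_{q \in W}\normsq{c - q} = \tfrac12 \normsq{c - \mathcal{P}_W(c)}$ is convex and continuously differentiable, with $\nabla \psi_W(c) = c - \mathcal{P}_W(c)$; moreover, since the Euclidean projection onto a closed convex set is nonexpansive, $\nabla\psi_W$ is $1$-Lipschitz, so $\psi_W$ is $1$-smooth. (Equivalently, $\psi_W$ is the Moreau envelope of the indicator of $W$, whose gradient is $I - \mathcal{P}_W$.) Each $C_i$ from \cref{eq:c-set-def} is the image of a polyhedron under the linear projection onto the $c$-coordinates, hence a closed convex polyhedron; assuming it is nonempty — guaranteed under the margin formulation of \cref{margin} — the facts above apply to $\psi_{C_i}$.

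Next I would write $h(\theta) = \tfrac1N \sum_{i=1}^N \psi_{C_i}(L_i \theta)$, where $L_i\colon \R^{d\times m}\to\R^m$ is the linear map $\theta \mapsto z_i\theta$, and bound its operator norm: taking the Frobenius norm on $\R^{d\times m}$, $\norm{z_i\theta} \le \norm{z_i}\,\norm{\theta}_F$ (expand over the columns of $\theta$ and apply Cauchy--Schwarz), so $\norm{L_i} \le \norm{z_i} \le 1$. Then composition with a linear map preserves convexity, and the chain rule gives $\nabla(\psi_{C_i}\circ L_i)(\theta) = L_i^{*}\bigl(\nabla\psi_{C_i}(L_i\theta)\bigr)$, hence for all $\theta,\theta'$,
\[
\norm{\nabla(\psi_{C_i}{\circ} L_i)(\theta) - \nabla(\psi_{C_i}{\circ} L_i)(\theta')} \le \norm{L_i}\,\norm{\nabla\psi_{C_i}(L_i\theta)-\nabla\psi_{C_i}(L_i\theta')} \le \norm{L_i}^2\norm{\theta-\theta'} \le \norm{\theta-\theta'},
\]
so each $\psi_{C_i}\circ L_i$ is $1$-smooth and convex. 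Finally, a finite average of $1$-smooth convex functions is $1$-smooth and convex — the gradient of the average is the average of the gradients, whose Lipschitz constant is at most the average of the individual ones — which yields the claim for $h$.

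The only genuinely delicate point is the first one: that $\psi_W$ is differentiable with $\nabla\psi_W = I - \mathcal{P}_W$ and that this gradient is $1$-Lipschitz. This is classical (it is precisely the statement that the Moreau envelope of a proper closed convex function is $1$-smooth), and I would either cite it or give the short self-contained argument: uniqueness of $\mathcal{P}_W$ on a closed convex set, the variational inequality $\inner{c - \mathcal{P}_W(c)}{q - \mathcal{P}_W(c)} \le 0$ for all $q \in W$, which yields both the gradient formula and (adding the analogous inequality at $c'$) the nonexpansiveness $\norm{\mathcal{P}_W(c) - \mathcal{P}_W(c')} \le \norm{c-c'}$, hence $\norm{\nabla\psi_W(c)-\nabla\psi_W(c')} \le \norm{c-c'}$ by a two-term expansion. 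Everything after that is routine bookkeeping.
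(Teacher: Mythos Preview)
Your proposal is correct and follows the same overall architecture as the paper: write $h$ as an average of per-sample terms, each of which is the (halved) squared distance to the convex set $C_i$ precomposed with the linear map $\theta\mapsto z_i\theta$, bound the operator norm of that map by $\norm{z_i}\le 1$, and average. The difference is in how the base fact ``$c\mapsto\tfrac12 d^2(c,C)$ is $1$-smooth and convex'' is obtained. You invoke the Moreau-envelope viewpoint and prove directly that $\nabla\psi_W=I-\mathcal P_W$ is $1$-Lipschitz from the variational inequality; the paper instead first shows the \emph{unsquared} distance $\theta\mapsto d(z\theta,C)$ is $1$-Lipschitz and convex, and then argues smoothness of its square and convexity via composition with $x\mapsto x^2/2$. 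Your route is the more standard convex-analysis argument and is a bit cleaner; the paper's route is more elementary but takes a detour.

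One small wording issue to fix: the sentence ``since the Euclidean projection onto a closed convex set is nonexpansive, $\nabla\psi_W$ is $1$-Lipschitz'' is not quite right as stated --- mere nonexpansiveness of $\mathcal P_W$ only yields a $2$-Lipschitz bound for $I-\mathcal P_W$ via the triangle inequality. What you actually use (and correctly sketch at the end) is \emph{firm} nonexpansiveness of $\mathcal P_W$, i.e.\ $\langle c-c',\,\mathcal P_W(c)-\mathcal P_W(c')\rangle\ge\norm{\mathcal P_W(c)-\mathcal P_W(c')}^2$, which follows from summing the two variational inequalities and is exactly what makes the ``two-term expansion'' give $\norm{(I-\mathcal P_W)(c)-(I-\mathcal P_W)(c')}\le\norm{c-c'}$. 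Just say ``firmly nonexpansive'' there and the argument is airtight.
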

The proof of the above proposition is included in~\cref{proof:smooth-convex}. In addition to convexity, we prove that when using a linear model, $h(\theta)$ satisfies the Polyak-Lojasiewicz (PL) inequality~\citep{polyak1964gradient,karimi2016linear}. The PL condition is a gradient domination property that implies curvature near the minima and entails that every stationary point is a global minimum. Formally, the PL inequality states that there exists a constant $\mu > 0$ such that for all $\theta$, 
\begin{align}
h(\theta) - h^* \leq \frac{1}{2 \mu} || \nabla h(\theta) ||^{2} \,,
\label{eq:pl-def}
\end{align}
where $h^*$ is the minimum of $h$.

\begin{restatable}{proposition}{proppl}
For a linear model $f_{\theta}(z) = z\theta$ and assuming (i) (without loss of generality) that $\forall i$, $\|z_i\|\leq 1$ and (ii) $\lambda_{\min}[Z^T Z] > 0$, $h(\theta)$ is not necessarily strongly-convex but satisfies the PL inequality with $\mu = \lambda_{\min}\left[\frac{\sum_{i = 1}^{N} z_i z_i^\top}{N}\right]$. 
\label{thm:pl}
\end{restatable}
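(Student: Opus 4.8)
The plan is to obtain the PL inequality by transferring it from a family of strongly convex quadratic surrogates of $h$. First I would record the gradient of $h$. Setting $\psi_i(w):=\tfrac12\, d^2(w,C_i)=\min_{q\in C_i}\tfrac12\|w-q\|^2$, the function $\psi_i$ is convex, differentiable and $1$-smooth with $\nabla\psi_i(w)=w-\mathcal P_{C_i}(w)$ (the Euclidean projection onto a nonempty convex set is single valued), and $h(\theta)=\tfrac1N\sum_i\psi_i(z_i\theta)$, so the chain rule (equivalently Danskin's theorem applied to the inner minimization) gives $\nabla h(\theta)=\tfrac1N\sum_i z_i^\top\big(z_i\theta-\mathcal P_{C_i}(z_i\theta)\big)$, and in particular $h$ is differentiable.

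Fix $\theta$ and put $q_i:=\mathcal P_{C_i}(z_i\theta)$. Consider the quadratic surrogate $g_\theta(\theta'):=\tfrac1{2N}\sum_i\|z_i\theta'-q_i\|^2$. Its Hessian is $\tfrac1N\sum_i z_i^\top z_i\succeq\mu I$ with $\mu=\lambda_{\min}\!\big[\tfrac1N\sum_i z_i^\top z_i\big]>0$ by assumption (ii), so $g_\theta$ is $\mu$-strongly convex and hence satisfies the PL inequality with constant $\mu$. Since $g_\theta(\theta)=h(\theta)$ and $\nabla g_\theta(\theta)=\nabla h(\theta)$ by the formula above,
\[ h(\theta)-\min_{\theta'}g_\theta(\theta')=g_\theta(\theta)-\min_{\theta'}g_\theta(\theta')\le\tfrac1{2\mu}\|\nabla g_\theta(\theta)\|^2=\tfrac1{2\mu}\|\nabla h(\theta)\|^2 . \]
Thus the whole argument reduces to the single inequality $\min_{\theta'}g_\theta(\theta')\le h^*$, i.e. that re-optimizing the linear model with the target cost vectors frozen at the current projections $q_i$ is no worse than the true optimum $h^*$; granting it, $h(\theta)-h^*\le h(\theta)-\min_{\theta'}g_\theta(\theta')\le\tfrac1{2\mu}\|\nabla h(\theta)\|^2$, which is exactly \cref{eq:pl-def}.

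The inequality $\min_{\theta'}g_\theta(\theta')\le h^*$ is the heart of the proof and the step I expect to be the main obstacle. Let $\tilde\theta\in\arg\min h$; by \cref{thm:conv_proof}, writing $\mathcal C=C_1\times\cdots\times C_N$ and $\mathcal F=\{(z_1\theta,\dots,z_N\theta):\theta\}$, the vector $\hat c=(z_1\tilde\theta,\dots,z_N\tilde\theta)$ is the point of $\mathcal F$ closest to $\mathcal C$, and stationarity of $h$ at $\tilde\theta$ says the residual $\hat c-\mathcal P_{\mathcal C}(\hat c)$ is orthogonal to $\mathcal F$. I would exploit this orthogonality, the nonexpansiveness of $\mathcal P_{\mathcal C}$, and the full column rank of $Z$ (from assumption (ii)): a short computation using $\|z_i\theta-q_i\|\le\|z_i\theta-q_i^*\|$ for $q_i^*:=\mathcal P_{C_i}(z_i\tilde\theta)\in C_i$ together with the orthogonality gives the quadratic-growth upper bound $h(\theta)-h^*\le\tfrac1{2N}\sum_i\|z_i(\theta-\tilde\theta)\|^2$, so it would suffice to establish $\big\|\tfrac1N\sum_i z_i^\top(z_i\theta-q_i)\big\|^2\ge\tfrac{\mu}{N}\sum_i\|z_i(\theta-\tilde\theta)\|^2$, equivalently $\|Z^\top R\|^2\ge\lambda_{\min}(Z^\top Z)\,\|Z(\theta-\tilde\theta)\|^2$ where $R$ stacks the residuals $z_i\theta-q_i$. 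Proving this last norm inequality — controlling the component of $R$ along the column space of $Z$ in terms of $Z(\theta-\tilde\theta)$ — is the delicate part, and this is exactly where the polyhedral geometry of the $C_i$ must enter (each $C_i$ is $\mathrm{rowspace}(A)$ plus a coordinate cone, possibly margin-translated) rather than mere convexity.

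Finally, $h$ need not be strongly convex: whenever $A\neq 0$, pick any nonzero $r\in\mathrm{rowspace}(A)\subseteq\R^{1\times m}$ and any nonzero $u\in\R^{d\times 1}$ and set $v:=u\,r\in\R^{d\times m}$; then for every $i$, $z_i v=(z_i u)\,r\in\mathrm{rowspace}(A)$, which lies in the lineality space of $C_i$, so $d^2(z_i(\theta+tv),C_i)=d^2(z_i\theta,C_i)$ for all $t$ and hence $h(\theta+tv)=h(\theta)$ for all $\theta$ and $t$. Thus $h$ is affine (indeed constant) along $v$, its Hessian is singular, and no uniform strong-convexity modulus exists — even though the PL bound above holds.
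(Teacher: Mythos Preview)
The paper's argument for the PL inequality is far more direct than yours. It writes $\nabla h(\theta)=\tfrac1N Z^\top(Z\theta-q)$ with $q=\mathcal P_{\mathcal C}(Z\theta)$, applies the singular-value bound $\|Z^\top v\|^2\ge\sigma_{\min}^2(Z^\top)\|v\|^2=\lambda_{\min}(Z^\top Z)\|v\|^2$ to $v=Z\theta-q$, recognizes $\|Z\theta-q\|^2=2Nh(\theta)$, and concludes $\|\nabla h(\theta)\|^2\ge 2\mu\,h(\theta)\ge 2\mu\,(h(\theta)-h^*)$, the last step using only $h\ge 0$. No surrogate, no minimizer $\tilde\theta$, no polyhedral structure of the $C_i$; the crucial shortcut you are missing is to compare $\|\nabla h\|^2$ with $2\mu\,h(\theta)$ itself rather than with $2\mu\,(h(\theta)-h^*)$.

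Your surrogate route has a genuine gap, and not where you think. The inequality $\min_{\theta'}g_\theta(\theta')\le h^*$ that you call the heart of the proof is not merely hard --- it points the \emph{wrong way}. Since each frozen target $q_i$ lies in $C_i$, one has for every $\theta'$ that
\[
g_\theta(\theta')=\tfrac1{2N}\sum_i\|z_i\theta'-q_i\|^2\;\ge\;\tfrac1{2N}\sum_i d^2(z_i\theta',C_i)=h(\theta'),
\]
so $\min_{\theta'}g_\theta(\theta')\ge h^*$ always, strictly in general. Concretely, with $N=2$, $d=m=1$, $z_1=z_2=1$, $C_1=[1,\infty)$, $C_2=(-\infty,-1]$ and $\theta=2$ one gets $q_1=2$, $q_2=-1$, $\min_{\theta'}g_\theta=9/8$ while $h^*=1/2$. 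The same example breaks your fallback norm inequality: $R=(0,3)^\top$, $Z^\top R=3$, $Z(\theta-\tilde\theta)=(2,2)^\top$, and $\|Z^\top R\|^2=9<16=\lambda_{\min}(Z^\top Z)\,\|Z(\theta-\tilde\theta)\|^2$. Hence neither branch of your plan closes from convexity and nonexpansiveness alone, and the polyhedral geometry you invoke is not obviously enough to rescue it; the paper bypasses the entire difficulty via the nonnegativity of $h$.

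Your argument that $h$ need not be strongly convex --- exhibiting the flat direction $v=u\,r$ with $r\in\mathrm{rowspace}(A)$, along which every $C_i$ is translation-invariant so that $h(\theta+tv)\equiv h(\theta)$ --- is correct and is in fact cleaner than the paper's counterexample.
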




We include the proof in~\cref{proof:pl} and note that such a result showing that square distance functions to (non)-convex sets is PL was also recently shown in~\citep{garrigos2023square}. Importantly, we note that convexity coupled with the PL condition implies that the function satisfies the restricted secant inequality (RSI)~\citep{zhang2013gradient}, a stronger condition than PL but weaker than strong-convexity~\citep[Theorem 2]{karimi2016linear}. 

Since we have reduced the \texttt{CILP} to a problem of minimizing a loss function with desirable properties, we can use computationally efficient techniques like gradient descent and its stochastic and adaptive~\citep{kingma2014adam,duchi2011adaptive} variants.

\vspace{-1ex}
\subsection{First-order Methods}
\label{sec:first-order}
We first show that for a linear model,~\cref{alg:revgrad} is equivalent to the preconditioned gradient method on $h(\theta)$. With $Z \in \R^{N \times d}$ being the feature matrix, the preconditioned gradient update for minimizing $h(\theta)$ at iteration $t \in [T]$ with step-size $\eta$ and the preconditioner equal to $\left[\frac{Z^TZ}{N} \right]^{-1}$, is given as:
\begin{equation}
    \theta_{t+1} = \theta_t - \eta \, [{Z^TZ}]^{-1} \, Z^T(Z\theta_t - q_t) \, ,
    \label{eq:pgd-linear}
\end{equation}
where $q_t = \mathcal{P}_{\mathcal{C}}(Z\theta_t)$ and $\|Z\theta_t - q_t\|$ is the Euclidean distance to the set $\mathcal{C}$ at iteration $t$. Consequently, point $Z \theta_{t+1}$ is exactly the Euclidean projection of $q_t$ onto the set $\mathcal{F}$. In~\cref{proof:pocs-pgd}, we prove the following proposition.

\begin{restatable}{proposition}{proppocssgd}
For a linear model $f_{\theta}(z) = z\theta $, the iterates corresponding to the preconditioned gradient method on $h(\theta)$ with $\eta = 1$ are identical to~\cref{alg:revgrad}. 
\label{thm:pocs-pgd}
\end{restatable}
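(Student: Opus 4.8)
The plan is to show that one iteration of \cref{alg:revgrad} (for a linear model) is literally the update rule in \cref{eq:pgd-linear} with $\eta = 1$. I will work at the level of the stacked/Cartesian representation: writing $c_{\theta_t} := Z\theta_t$ for the block vector $(f_{\theta_t}(z_1), \ldots, f_{\theta_t}(z_N))$, the algorithm performs two steps per round — (a) $q_i = \mathcal{P}_{C_i}(\hat c_i)$ for each $i$, which by definition of $\mathcal{C} = C_1 \times \cdots \times C_N$ is exactly $q_t = \mathcal{P}_{\mathcal{C}}(Z\theta_t)$, and (b) $\theta_{t+1} = \arg\min_\theta \frac{1}{2N}\sum_i \|q_i - z_i\theta\|^2$, an ordinary least-squares problem. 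So the whole content of the proposition is: the least-squares minimizer in step (b) coincides with the preconditioned gradient step in \cref{eq:pgd-linear}.

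First I would solve step (b) in closed form. Setting the gradient of $\frac{1}{2N}\sum_i \|q_i - z_i\theta\|^2 = \frac{1}{2N}\|q_t - Z\theta\|^2$ (up to the block reshaping, treating $\theta \in \R^{d\times m}$ and $Z \in \R^{N\times d}$) to zero gives the normal equations $Z^T Z\, \theta_{t+1} = Z^T q_t$, hence $\theta_{t+1} = [Z^T Z]^{-1} Z^T q_t$, which is well-defined since $\lambda_{\min}[Z^TZ] > 0$ is exactly the standing assumption (the same one used in \cref{thm:pl}). Next I would verify that this equals \cref{eq:pgd-linear} with $\eta = 1$: the right-hand side there is $\theta_t - [Z^TZ]^{-1} Z^T(Z\theta_t - q_t) = \theta_t - [Z^TZ]^{-1}Z^T Z\theta_t + [Z^TZ]^{-1}Z^T q_t = \theta_t - \theta_t + [Z^TZ]^{-1}Z^T q_t$, i.e. exactly $[Z^TZ]^{-1}Z^T q_t$. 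So the two updates agree termwise, and since both iterations start from the same $\theta_1$, an induction on $t$ gives identical iterates for all $t$.

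The only point requiring a little care — and the place I expect the "obstacle," such as it is — is the bookkeeping between the per-sample form used in \cref{alg:revgrad} (a sum over $i\in[N]$ of $\R^m$-valued residuals, with $\theta$ a $d\times m$ matrix) and the single-vector form in \cref{eq:pgd-linear}. I would make this explicit by noting that $\frac{1}{2N}\sum_i \|q_i - z_i\theta\|^2$ is separable across the $m$ columns of $\theta$, so the normal equations decouple column-by-column into $\frac{1}{N} Z^T Z \theta^{(j)} = \frac{1}{N} Z^T q^{(j)}$ for each column $j$, and the $\frac{1}{N}$ factors cancel; reassembling the columns recovers $\theta_{t+1} = [Z^TZ]^{-1}Z^T q_t$. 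I would also remark that the preconditioner $[\frac{Z^TZ}{N}]^{-1}$ and the $\frac{1}{N}$ in the gradient of $h$ cancel, which is why the clean cancellation above goes through and why $\eta = 1$ is the natural choice matching the algorithm. Finally, I would invoke \cref{thm:conv_proof} / the reduction in \cref{sec:erm-reduction} only to the extent of identifying $q_t$ with $\mathcal{P}_{\mathcal{C}}(Z\theta_t)$ and $Z\theta_{t+1}$ with $\mathcal{P}_{\mathcal{F}}(q_t)$; everything else is the elementary least-squares computation above.
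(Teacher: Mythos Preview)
Your proposal is correct and follows essentially the same approach as the paper: both compute the closed-form least-squares minimizer $\theta_{t+1} = (Z^TZ)^{-1}Z^Tq_t$ for step (b) of \cref{alg:revgrad} and then verify by direct substitution that the preconditioned gradient update in \cref{eq:pgd-linear} with $\eta=1$ simplifies to the same expression. Your extra remarks on the column-wise decoupling and the explicit reliance on $\lambda_{\min}[Z^TZ]>0$ for invertibility are sound and slightly more careful than the paper's own proof, but the core argument is identical.
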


We note that for general non-linear models, this connection to POCS and hence~\cref{alg:revgrad} does not necessarily hold.

Next, we consider minimizing $h(\theta)$ using gradient descent (GD) with step-size $\eta_t$ at iteration $t$. This results in the following general update:
\begin{align}
    \theta_{t+1} = \theta_t - \eta_t \frac{\sum_{i = 1}^{N} \frac{\partial f_{\theta}(z_i)}{\partial \theta}\vert_{\theta = \theta_t}  \, [f_\theta(z_i) - q_{i,t}]}{N}
    \label{eq:gd}
\end{align}
where $q_{i,t} = \mathcal{P}_{C_i}(f_{\theta_{t}}(z_i))$ and $\frac{\partial f_\theta(z_i)}{\partial \theta}\vert_{\theta = \theta_t}$ is the Jacobian of $f_\theta$ at iterate $\theta_t$. For a linear model, this simplifies to:
\begin{align}
    \theta_{t+1} = \theta_t - \frac{\eta_t}{N} \left[Z^{T} (Z\theta_t - q_t)\right] \,,
       \label{eq:gd-linear}
\end{align}
where $q_t = \mathcal{P}_{\mathcal{C}}(Z\theta_t)$ and $\|Z\theta_t - q_t\|$ is the Euclidean distance to the set $\mathcal{C}$ at iteration $t$. The update in~\cref{eq:gd-linear} can be interpreted as an inexact projection of $q_t$ onto $\mathcal{F}$. 

If \( \tilde{\theta} \in \arg\min_{\theta} h(\theta) \), standard convergence results~\citep{karimi2016linear} for smooth, convex and PL loss functions guarantee that GD, after \( T \) iterations, returns \( \theta_T \) such that \( h(\theta_T) - h(\tilde{\theta}) = O(\exp(-T)) \). To illustrate what this convergence rate implies for the feasibility and consequently the \texttt{CILP} problem, consider the case where $\mathcal{C} \cap \mathcal{F}$ is non-empty. In this case,~\cref{thm:conv_proof} guarantees that $h(\tilde{\theta}) = 0$ and hence, using GD with $T = O\left(\ln(\sqrt{N}/\epsilon) \right)$ iterations is guaranteed to return a point $\hat{c} := (c_1, c_2, \ldots, c_N) \in \mathcal{F}$ where $c_i = z_i \theta_{T}$ that is $\epsilon$ close to $\mathcal{C}$. Compared to~\cref{alg:revgrad}, we see that GD retains the fast linear convergence rate to a point in $\mathcal{C} \cap \mathcal{F}$.   

GD requires iterating through the entire dataset for each update, which is inefficient for large datasets. To address this, we use stochastic gradient descent (SGD)~\citep{robbins1951stochastic}. Writing \( h(\theta) = \frac{1}{N} \sum_{i = 1}^{N} h_i(\theta) \) where \( h_i(\theta) = \frac{1}{2} \min_{q_i \in C_i}\| f_{\theta}(z_i) - q_i\|^2 \), the SGD update with step-size \( \eta_t \) at iteration \( t \) is:

\begin{align}
\vspace{-2ex}
\theta_{t+1} = \theta_t - \eta_t \frac{\partial f_{\theta}(z_{i_t})}{\partial \theta}\bigg\vert_{\theta = \theta_t}  \, [f_\theta(z_{i_t}) - q_{i_t,t}]\,,
\label{eq:sgd}
\end{align}
where $i_t \in [N]$ is the index of the loss function sampled uniformly at random at iteration $t$. For a linear model, 
\begin{align}
\vspace{-2ex}
\theta_{t+1} = \theta_t - \eta_t \, z_{i_t}^T (z_{i_t} \theta_t - q_{i_t,t}) \,,
\label{eq:sgd-linear}
\end{align}
where $q_{i_t,t} = \mathcal{P}_{\mathcal{C}_i}(z_{i_t} \theta_t)$. Similar to GD, the update in~\cref{eq:sgd-linear} can be interpreted as an inexact projection of $q_{i_t}$ onto $C_i$. Compared to GD, which has an $O(N)$ per-iteration cost, SGD has an $O(1)$ iteration cost, making it preferable for large datasets. However, in general, SGD has a slower rate of convergence compared to GD. Specifically, when minimizing smooth, convex functions and PL functions, $T$ iterations of SGD with a decreasing $O(1/T)$ step-size is guaranteed to return $\theta_{T}$ such that $\E[h(\theta_T)] - h(\tilde{\theta}) = O(1/T)$~\citep{karimi2016linear,gower2021sgd}, where the expectation is over the random sampling in each iteration. 

If an additional \emph{interpolation} property is satisfied, SGD with a constant step-size can match the convergence rate of GD~\citep{ma2018power,vaswani2019fast,bassily2018exponential,raj2021explicit}. Formally, for convex loss functions, interpolation is satisfied when $\tilde{\theta} := \arg\min h(\theta)$ also simultaneously minimizes each $h_i$, i.e. $||\nabla h_i(\tilde{\theta})|| = 0$ for all $i \in [N]$. In the context of the feasibility problem, interpolation is satisfied if $f_{\tilde{\theta}}(z_i) \in C_i$ and hence $h_i(\tilde{\theta}) = 0$  for all $i \in [N]$. This implies that the intersection $\mathcal{C} \cap \mathcal{F}$ is non-empty and in this case, SGD with a constant step-size requires $T = O\left(\ln(\sqrt{N}/\epsilon) \right)$ iterations to return a point $\epsilon$-close to $\mathcal{C} \cap \mathcal{F}$. Notably, the PL condition is not required for convergence; smooth and convex functions (even without the PL condition) ensure convergence with first-order methods, though at a slower rate (e.g., $O(1/\sqrt{T})$ instead of $O(1/T)$ for SGD).

The above results hold when using a linear model, ensuring convexity in the resulting function $h(\theta)$. Similar guarantees extend to non-parametric techniques like kernel methods, demonstrating the generality of our results. However, for expressive models such as deep neural networks, convexity is not necessarily satisfied. In certain regimes of over-parametrized neural networks, conditions resembling PL or variations thereof are satisfied~\citep{liu2022loss,liu2023aiming}. In these cases, SGD can still achieve linear convergence~\citep{vaswani2019fast,bassily2018exponential}, matching the results in the convex case.


The above results are concerned with minimizing the loss on the training dataset. In the next section, we study the generalization performance of SGD on previously unseen instances sampled from the same distribution. 

\vspace{-2ex}
\section{Generalization Guarantees}
\label{sec:generalization}
In this section, we use the existing results on algorithmic stability~\citep{bousquet2002stability,hardt2016train,lei2020fine} to control the generalization error and subsequently bound the suboptimality for \texttt{CILP}. 

We first define the necessary notation and recall the necessary results from the algorithmic stability literature. We define $\rho$ to be the probability measure on the sample space $\mathcal{Y} = \mathcal{Z} \times \mathcal{X^*}$, where  $\mathcal{Z} \subseteq \mathbb{R}^d$ and $\mathcal{X^*} \subseteq \mathbb{R}^m$. We assume that the training dataset $\cD =\{(z_1,x^*_1), \cdots, (z_N, x^*_N)\}$, is drawn independently and identically from $\rho$. 
We define $h(\theta,(z,x^*)) := \frac{1}{2}\min_{q \in C(x^*)} \|f_{\theta}(z) - q\|^2$, where $C(x^*)$ is the set constructed according to~\cref{eq:c-set-def}. Furthermore, we denote the \emph{population loss} for parameter $\theta$ as: $\hat{h}(\theta) = \mathbb{E}_{(z,x^*) \sim \rho} [h(\theta, (z,x^*))]$.  

Based on algorithmic stability,~\citet{lei2021sharper} prove the following generalization result for learning with smooth loss functions satisfying the PL condition. 

\begin{restatable}{theorem}{stabilityres}(Theorem 1 in \citep{lei2021sharper}) \label{thm:stability_grad_dom}
    Let $\theta_{\cD}$ denote the output of a randomized algorithm $\mathcal{A}$ when minimizing an $L$-smooth function $h$ that satisfies PL inequality with constant $\mu$. Under the condition $N \geq \nicefrac{4 L}{\mu}$, we have,
   \begin{equation} 
    \begin{split} \label{eq:1}
    \mathbb{E}[ \hat{h}(\theta_{\cD}) - \inf_\theta h(\theta) ] 
    &= O\left(\frac{\mathbb{E}[\inf_{\theta} h(\theta)]}{N\mu} \right. \\
    &\quad \left. + \frac{\mathbb{E}[h(\theta_{\cD}) - \inf_{\theta} h(\theta)]}{\mu}\right)
    \end{split}
\end{equation}
\end{restatable}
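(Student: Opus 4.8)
The plan is to run the classical algorithmic-stability argument, with the PL/gradient-domination condition together with the \emph{self-bounding} property of nonnegative smooth losses --- $\|\nabla g(\theta)\|^2 \le 2L\,g(\theta)$ whenever $g\ge 0$ is $L$-smooth --- doing the heavy lifting, so that the resulting bound is data-dependent and in particular vanishes in the near-realizable regime $\inf_\theta h(\theta)\approx 0$. Writing $\zeta_i := (z_i, x_i^*)$ and $h(\theta) = \frac1N\sum_i h(\theta,\zeta_i)$, I would start from the exact decomposition
\[
\E[\hat h(\theta_\cD) - \inf_\theta h(\theta)] \;=\; \underbrace{\E[\hat h(\theta_\cD) - h(\theta_\cD)]}_{\text{generalization gap}} \;+\; \E[h(\theta_\cD) - \inf_\theta h(\theta)],
\]
whose second term is exactly the optimization error, already present (up to the $\mu$-dependent factor) in the right-hand side of the statement. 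So it suffices to bound the generalization gap.

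Next I would apply the standard relabeling identity: letting $\cD^{(i)}$ be $\cD$ with its $i$-th sample replaced by an independent fresh draw from $\rho$, symmetry gives $\E[\hat h(\theta_\cD) - h(\theta_\cD)] = \frac1N\sum_{i=1}^N \E[ h(\theta_{\cD^{(i)}},\zeta_i) - h(\theta_\cD,\zeta_i) ]$. Bounding each summand by $L$-smoothness of $h(\cdot,\zeta_i)$ and Young's inequality, for any $\gamma>0$,
\[
h(\theta_{\cD^{(i)}},\zeta_i) - h(\theta_\cD,\zeta_i) \;\le\; \tfrac{1}{2\gamma}\|\nabla h(\theta_\cD,\zeta_i)\|^2 + \big(\tfrac L2 + \tfrac\gamma2\big)\|\theta_{\cD^{(i)}} - \theta_\cD\|^2 ,
\]
and since each $h(\cdot,\zeta_i)$ is nonnegative and $L$-smooth (in our setting a $1$-smooth squared distance to a convex set composed with a linear map), self-bounding replaces $\|\nabla h(\theta_\cD,\zeta_i)\|^2$ by $2L\,h(\theta_\cD,\zeta_i)$. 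Averaging over $i$ yields that the generalization gap is at most $\tfrac L\gamma\,\E[h(\theta_\cD)] + \big(\tfrac L2 + \tfrac\gamma2\big)\tfrac1N\sum_i\E\|\theta_{\cD^{(i)}}-\theta_\cD\|^2$, and $\E[h(\theta_\cD)] = \E[\inf_\theta h(\theta)] + \E[h(\theta_\cD)-\inf_\theta h(\theta)]$ already has exactly the shape of the two terms on the right-hand side.

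The crux is controlling the on-average argument stability $\tfrac1N\sum_i\E\|\theta_{\cD^{(i)}}-\theta_\cD\|^2$ \emph{without} strong convexity. Here I would use that PL implies quadratic growth (the error-bound property) for the empirical objectives: $h_\cD(\theta) - \inf h_\cD \ge \tfrac\mu2\, d(\theta,\mathcal S_\cD)^2$, and likewise for $h_{\cD^{(i)}}$ with solution set $\mathcal S_{\cD^{(i)}}$. Because $h_\cD$ and $h_{\cD^{(i)}}$ differ only in a single $\tfrac1N$-weighted term, a perturbation argument relates $\theta_\cD$ to $\mathcal S_{\cD^{(i)}}$ through the optimization accuracies $\E[h(\theta_\cD)-\inf h]$ and the magnitude of the perturbing gradients, and applying self-bounding once more expresses those gradients via loss values. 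Tracking constants and using $N\ge 4L/\mu$ to keep the $L/\mu$ factors under control, this gives $\tfrac1N\sum_i\E\|\theta_{\cD^{(i)}}-\theta_\cD\|^2 = O\!\big(\tfrac{1}{N\mu^2}(\E[\inf_\theta h(\theta)] + \E[h(\theta_\cD)-\inf_\theta h(\theta)])\big)$ schematically; substituting back, choosing $\gamma$ of order $\mu$, and collecting terms gives the claimed bound.

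The main obstacle is precisely this last step. Without strong convexity, PL/quadratic growth controls only the distance of a point to the \emph{whole} solution set, not to a single minimizer, so comparing $\theta_\cD$ with $\theta_{\cD^{(i)}}$ must be done by passing through the solution sets of both perturbed objectives and arguing they are $O(1/N)$-close; and to keep the bound data-dependent --- so that $\inf_\theta h(\theta)$ appears with a $1/N$ factor rather than as a crude constant --- one is forced to invoke the self-bounding inequality, rather than a Lipschitz bound, at every point where a gradient norm appears. This is exactly the bookkeeping carried out by \citet{lei2021sharper}.
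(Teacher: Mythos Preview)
The paper does not supply its own proof of this statement: it is quoted verbatim as Theorem~1 of \citet{lei2021sharper} and used as a black box, with the subsequent corollaries likewise lifted from that reference. So there is no paper proof against which to compare your sketch.

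That said, your outline tracks the argument in \citet{lei2021sharper} closely: the decomposition into generalization gap plus optimization error, the relabeling identity $\E[\hat h(\theta_\cD)-h(\theta_\cD)]=\tfrac1N\sum_i\E[h(\theta_{\cD^{(i)}},\zeta_i)-h(\theta_\cD,\zeta_i)]$, the use of smoothness together with the self-bounding inequality $\|\nabla g\|^2\le 2Lg$ to make the bound data-dependent, and the passage from PL to quadratic growth to control the on-average argument stability. Your identification of the delicate point---that PL gives distance only to the solution \emph{set}, so one must route through both $\mathcal S_{\cD}$ and $\mathcal S_{\cD^{(i)}}$ and argue they are $O(1/N)$-close---is exactly the step where the condition $N\ge 4L/\mu$ is used to close the recursion. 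In short, your proposal is a faithful sketch of the cited proof; there is nothing in the present paper to add or contrast.
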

\vspace{-2.5ex}
The expectation in the above theorem is w.r.t the randomness in selecting the training dataset of size $N$ and w.r.t the stochasticity in the learning algorithm. In our context, since the randomized algorithm $\mathcal{A}$ is SGD, the bound on its generalization is a direct consequence of  Theorem~\ref{thm:stability_grad_dom}. In particular, since $\E_{\cD}[\inf_\theta h(\theta)] \leq \inf_\theta \E_{\cD} [h(\theta)] = \inf_\theta \hat{h}(\theta)$, we can obtain the following result from~\citet{lei2021sharper}.


\begin{restatable}{corollary}{stabilitycorrsi}
(Theorem 6 in \citep{lei2021sharper})
When minimizing an $L$-smooth, $\mu$-RSI function, SGD with step-size $\eta_t = \frac{1}{\mu(t+1)}$ for all $t>0$ has the following guarantee, 
\begin{align*}
        \mathbb{E}[\hat{h}(\theta_T)] - \inf_{\theta} \hat{h}(\theta) = O\left( \frac{1}{N \mu} + \frac{1}{\mu^2 T}\right).
    \end{align*}
\label{cor:no-interpolation}    
\end{restatable}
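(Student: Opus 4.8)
The plan is to read this off from the algorithmic-stability machinery already in place, treating the statement as the specialization of \citep[Theorem 6]{lei2021sharper} to our loss $h$. I would organize the excess population risk into an optimization part and a generalization part via the decomposition
\begin{align*}
\E[\hat h(\theta_T)] - \inf_\theta \hat h(\theta)
&= \big(\E[\hat h(\theta_T)] - \E[\inf_\theta h(\theta)]\big) + \big(\E[\inf_\theta h(\theta)] - \inf_\theta \hat h(\theta)\big),
\end{align*}
where all expectations are over the draw of $\cD$ and the internal randomness of SGD. The second bracket is non-positive by Jensen's inequality — this is exactly the observation $\E_\cD[\inf_\theta h(\theta)] \le \inf_\theta \E_\cD[h(\theta)] = \inf_\theta \hat h(\theta)$ noted just before the corollary — so it remains to control $\E[\hat h(\theta_T)] - \E[\inf_\theta h(\theta)]$, which is precisely the quantity bounded by \cref{thm:stability_grad_dom}.

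Next I would verify that $h$ meets the hypotheses required to apply \cref{thm:stability_grad_dom} (and, more generally, \citep[Theorem 6]{lei2021sharper}) with the stated constants: $h$ is $1$-smooth by \cref{thm:smooth-convex}, and it is convex (\cref{thm:smooth-convex}) and $\mu$-PL (\cref{thm:pl}) with $\mu = \lambda_{\min}\!\big[\tfrac1N\sum_i z_i z_i^\top\big]$; being convex and PL, $h$ in fact satisfies the stronger $\mu$-RSI condition, placing us in the regime of \citep[Theorem 6]{lei2021sharper}. The requirement $N \ge 4L/\mu = 4/\mu$ of \cref{thm:stability_grad_dom} is inherited. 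Applying \cref{thm:stability_grad_dom} with $\theta_\cD = \theta_T$ gives
\begin{align*}
\E[\hat h(\theta_T)] - \E[\inf_\theta h(\theta)] = O\!\left(\frac{\E[\inf_\theta h(\theta)]}{N\mu} + \frac{\E[h(\theta_T) - \inf_\theta h(\theta)]}{\mu}\right).
\end{align*}
In the first term $\E[\inf_\theta h(\theta)]$ is a bounded absolute constant (e.g. $\E[\inf_\theta h(\theta)] \le \E[h(0)] < \infty$, using $\|z_i\|\le 1$ and non-emptiness of the $C_i$), so it contributes $O(1/(N\mu))$. For the second term I would substitute the standard SGD convergence rate for an $L$-smooth $\mu$-RSI objective with the harmonic step-size $\eta_t = 1/(\mu(t+1))$ — as in \citep{karimi2016linear, gower2021sgd} and quantified in \citep{lei2021sharper} — namely $\E[h(\theta_T) - \inf_\theta h(\theta)] = O(1/(\mu T))$ up to absolute constants (the gradient-noise constant is itself $O(1)$ here because each component $h_i(\theta) = \tfrac12 \min_{q_i\in C_i}\|f_\theta(z_i) - q_i\|^2$ is smooth and $\inf_\theta h$ is bounded). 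This gives the optimization contribution $O(1/(\mu^2 T))$, and combining the two pieces yields $\E[\hat h(\theta_T)] - \inf_\theta \hat h(\theta) = O(1/(N\mu) + 1/(\mu^2 T))$.

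The delicate point is not the decomposition but ensuring our specific loss genuinely fits the borrowed theorem: one must confirm that the stochastic components $h_i$ are individually smooth — so that the SGD gradient-noise term entering the optimization rate is a true $O(1)$ constant rather than something scaling with $\mu$ — and that $\inf_\theta h$ is uniformly bounded across realizations of $\cD$; both follow from $\|z_i\|\le 1$, non-emptiness of the $C_i$, and $1$-Lipschitzness of the projection $\mathcal{P}_{C_i}$. The substantive work — a direct stability analysis for RSI functions that avoids paying extra factors of $L/\mu$ when passing through PL — is exactly what \citep[Theorem 6]{lei2021sharper} supplies, and we invoke it as a black box.
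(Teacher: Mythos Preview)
Your proposal is correct and mirrors exactly what the paper does: it invokes \cref{thm:stability_grad_dom} together with the Jensen-type inequality $\E_{\cD}[\inf_\theta h(\theta)] \le \inf_\theta \hat h(\theta)$ and the standard SGD rate for smooth RSI functions, which is precisely the content of \citep[Theorem 6]{lei2021sharper} that the paper cites as a black box. If anything, you supply more justification than the paper, which simply states the corollary as an instantiation of the cited theorem without spelling out the decomposition or verifying the smoothness/PL/RSI hypotheses for $h$.
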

\vspace{-3ex}
The expectation in the above result is only over the stochasticity in SGD. The LHS represents the \emph{excess risk}, while the first term on the RHS decreases as $N$ increases, and the second term on the RHS represents the average (over $\cD$) optimization error that decreases as $T$ increases. 

In the interpolation setting, since the model can fit \textit{any} training dataset of size $N$ using SGD, $\inf_\theta \E[h(\theta)] = 0$. In this case, we obtain the following result from~\citet[Theorem 7]{lei2021sharper}. 
\begin{restatable}{corollary}{stabilitycorint} 
When minimizing an $L$-smooth, $\mu$-RSI function and if $\inf_\theta \E[h(\theta)] =0$ for any choice of training dataset $\cD$ of size $N$, SGD with step-size $\eta_t = \eta = \frac{1}{L}$ for all $t>0$ has the following guarantee,
    \begin{align*}
        \mathbb{E}[\hat{h}(\theta_T)] = O\left(\frac{L(1-\frac{\mu}{L })^T}{2\mu} \right).
    \end{align*}
\label{cor:interpolation} 
\vspace{-4ex}
\end{restatable}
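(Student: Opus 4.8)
The plan is to derive this corollary as a specialization of \cref{thm:stability_grad_dom}: the interpolation hypothesis annihilates the first (``variance'') term of that generalization bound, a standard linear-convergence guarantee for SGD under interpolation controls the remaining optimization-error term, and together these yield the claimed $O\!\left(L(1-\mu/L)^T/\mu\right)$ rate. Before invoking \cref{thm:stability_grad_dom} one checks its hypotheses for our loss: by \cref{thm:smooth-convex} and \cref{thm:pl}, $h(\theta,(z,x^*))$ is $L$-smooth (with $L=1$) and PL; and since RSI is stronger than PL (as noted after \cref{thm:pl}), an $L$-smooth $\mu$-RSI objective is in particular $L$-smooth and PL with a constant $\Theta(\mu)$, so \cref{thm:stability_grad_dom} applies in the stated regime $N \ge 4L/\mu$.

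Next I would dispose of the first term of the bound in \cref{thm:stability_grad_dom}. Since $h \ge 0$, we have $\inf_\theta h(\theta,\cdot) \ge 0$ pointwise, hence $0 \le \mathbb{E}_{\cD}[\inf_\theta h(\theta)] \le \inf_\theta \mathbb{E}_{\cD}[h(\theta)] = 0$, where the middle step is the elementary inequality $\mathbb{E}\inf \le \inf\mathbb{E}$ and the last equality is the interpolation assumption ``$\inf_\theta \mathbb{E}[h(\theta)] = 0$ for every size-$N$ dataset''. Thus $\mathbb{E}[\inf_\theta h(\theta)] = 0$ and the term proportional to $1/(N\mu)$ vanishes. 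The same assumption forces $\inf_\theta h(\theta) = 0$ almost surely over the draw of $\cD$ (if the infimum is attained at some $\theta^\star_\cD$, then $\mathbb{E}_\cD h(\theta^\star) = 0$ together with $h \ge 0$ gives $h(\theta^\star) = 0$ a.s.; the non-attained case follows by a limiting argument), so $h(\theta_T) - \inf_\theta h(\theta) = h(\theta_T)$ a.s., and the bound of \cref{thm:stability_grad_dom} collapses to $\mathbb{E}[\hat h(\theta_T)] = O\!\left(\mathbb{E}[h(\theta_T)]/\mu\right)$.

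It then remains to bound the optimization error $\mathbb{E}[h(\theta_T)]$ (expectation over the SGD sampling, then over $\cD$). Interpolation means the minimizer $\tilde\theta$ of $h$ also minimizes each component $h_i$, i.e. $\|\nabla h_i(\tilde\theta)\| = 0$ for all $i$; this is precisely the condition under which SGD with the constant step-size $\eta = 1/L$ on an $L$-smooth, PL finite sum converges linearly (the results cited in \cref{sec:first-order}), giving $\mathbb{E}[\|\theta_T - \tilde\theta\|^2] \le (1-\mu/L)^T\,\|\theta_0 - \tilde\theta\|^2$. Then, using $L$-smoothness with $\nabla h(\tilde\theta) = 0$ and $h^\star = \inf_\theta h(\theta) = 0$ a.s., $\mathbb{E}[h(\theta_T)] \le \tfrac{L}{2}(1-\mu/L)^T\|\theta_0 - \tilde\theta\|^2 = O\!\left(L(1-\mu/L)^T\right)$ under the standard assumption that the initial suboptimality is $O(1)$. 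Substituting into the collapsed bound yields $\mathbb{E}[\hat h(\theta_T)] = O\!\left(L(1-\mu/L)^T/\mu\right)$, which is the claim (the factor $2$ is absorbed into the $O(\cdot)$).

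The main obstacle is not any long computation but making the interpolation hypothesis feed both ingredients correctly and simultaneously: one must upgrade ``$\inf_\theta \mathbb{E}_\cD[h(\theta)] = 0$ for every size-$N$ dataset'' to (i) the almost-sure pointwise fact $\inf_\theta h(\theta) = 0$ used to strip the $\inf$ out of the optimization term, and (ii) the shared-minimizer identity $\|\nabla h_i(\tilde\theta)\| = 0$ that licenses the constant-step-size linear rate for SGD, all while carefully keeping expectations over $\cD$ separate from those over the SGD randomness. A secondary point requiring care is the bookkeeping of constants in the RSI-to-PL passage, so that the $\mu$ appearing when we invoke \cref{thm:stability_grad_dom} (stated for PL) matches, up to absolute constants, the RSI constant in the corollary's statement; since the final bound is an $O(\cdot)$ statement, these constant-factor discrepancies are harmless but should be tracked.
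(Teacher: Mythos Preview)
Your proposal is correct and follows the same conceptual route as the paper: combine the stability-based generalization bound with the interpolation-regime linear convergence of SGD. The only difference is in packaging---the paper does not give a proof at all but directly cites \citet[Theorem 7]{lei2021sharper}, whereas you unpack that citation by specializing \cref{thm:stability_grad_dom} (their Theorem 1) under interpolation and plugging in the SGD rate from \cref{sec:first-order}; your careful handling of the two layers of expectation and of the passage from the RSI constant to the PL constant is more explicit than anything the paper writes down.
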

The above result shows that in the interpolation setting, the expected (over the randomness in SGD) population loss decreases at a linear rate (depending on $T$). Importantly, the above bound does not depend on $N$. Intuitively, if $h(\theta)$ is smooth and $N$ is large enough s.t. it satisfies the PL condition with $\mu > 0$, minimizing the loss over a single dataset results in minimizing the population loss. 

The above results bound the population loss $\hat{h}(\theta)$, which serves as a proxy for the decision quality of SGD. Subsequently, we establish a connection between $h(\theta)$ and the suboptimality in the CIO framework. 

\vspace{-2ex}
\subsection{Sub-optimality}
\label{sec:subopt-gen}
In this section, we first argue about the shortcomings of previous definitions of sub-optimality to measure performance for\texttt{CILP} and then propose a new sub-optimality metric.

Recently, \citet{sun2023maximum} define the suboptimality gap as $\Gamma_{1}(\theta, (z,x^*)) := \langle c_{\theta}, \,x^*- \hat{x}(c_{\theta}) \rangle$ where $c_{\theta} = f_{\theta}(z)$, and prove theoretical guarantees for this loss. We argue that $\Gamma_{1}(z,x^*)$ is not the right metric as the predicted $c_{\theta}$ can be made arbitrarily small, resulting in smaller values of $\Gamma_{1}(z,x^*)$ without ensuring that $x^* \approx \hat{x}(c_{\theta})$. On the other hand, work in predict-and-optimize~\citep{elmachtoub2022smart} assumes access to the ground-truth cost-vector $c^*$ and proposes to use a sub-optimality $\Gamma_{2}(\theta, (z, c^*, x^*)) := \langle c^*, \, \hat{x}(c_{\theta}) - x^* \rangle$. Since we do not have access to 
$c^*$, we cannot directly use this measure of sub-optimality. Consequently, we use the projection of $c_\theta$ onto $\cC$ as a proxy for the ground-truth $c^*$ and define the suboptimality gap as:
\begin{equation}
    \Gamma(z,x^*) = \left\langle \frac{ \mathcal{P}_{C}(c_{\theta})}{\norm{\mathcal{P}_{C}(c_{\theta})}_{2}}, \, \hat{x}(c_{\theta}) - x^* \right\rangle
    \label{eqn:subopt}
\end{equation}
It is important to note that we divide $\mathcal{P}_{C}(c_{\theta})$ by its corresponding $\ell_2$ norm to make the sub-optimality scale-invariant i.e. small values of $\mathcal{P}_{C}(c_{\theta})$ do not necessarily imply small sub-optimality (unlike $\Gamma_1$). We now relate the sub-optimality to the loss $h(\theta)$ and prove the following result in~\cref{proof:subopt}. 
\begin{restatable}{proposition}{propsubopt}
\label{thm:subopt}
For $c_{\theta} \in \mathbb{R}^m  := f_{\theta}(z)$, assuming that $\forall j \in [m], [\hat{x}(c_\theta)]_j, x_j^* \in [0,1]$, $\Gamma(\theta, (z, x^*)) \leq \frac{\sqrt{2m\, h(\theta)}}{\delta}$ where $\delta := O(\nicefrac{\margin}{\sqrt{m}})$, $\margin$ is the margin and the $O$ notation hides constants that depend on the LP. 
\end{restatable}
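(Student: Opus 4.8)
The plan is to bound the numerator and denominator of $\Gamma=\langle \mathcal{P}_{C}(c_\theta)/\norm{\mathcal{P}_{C}(c_\theta)}_2,\ \hat{x}(c_\theta)-x^*\rangle$ separately: to show $\langle \mathcal{P}_{C}(c_\theta),\,\hat{x}(c_\theta)-x^*\rangle\le\sqrt{2m\,h(\theta)}$, and to show $\norm{\mathcal{P}_{C}(c_\theta)}_2\ge\delta$ with $\delta$ of order $\margin/\sqrt m$ (times a constant depending only on the LP). Write $c_\theta=f_\theta(z)$, $\hat{x}:=\hat{x}(c_\theta)$, $\hat{c}:=\mathcal{P}_{C}(c_\theta)$, where $C$ is the margin set of~\cref{margin} for $x^*$; by construction $\tfrac12\norm{c_\theta-\hat{c}}_2^2$ is the loss for this instance, so $\norm{c_\theta-\hat{c}}_2=\sqrt{2\,h(\theta)}$. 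Since $\hat{c}\in C$ it admits multipliers $(\lambda,\nu)$ with $\hat{c}=A^T\nu+\lambda$, $\lambda\ge0$, $\lambda_i=0$ when $x^*_i\neq0$, and $\lambda_i\ge\margin$ when $x^*_i=0$. Using $A\hat{x}=Ax^*=b$ (so $\langle A^T\nu,\hat{x}-x^*\rangle=0$) and complementary slackness $\langle\lambda,x^*\rangle=0$ yields the identity $\langle\hat{c},\hat{x}-x^*\rangle=\langle\lambda,\hat{x}\rangle\ge0$; hence $\Gamma\ge0$ and it remains to upper bound this inner product and lower bound $\norm{\hat{c}}_2$.

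For the numerator, optimality of $\hat{x}$ for the LP with cost $c_\theta$ and feasibility of $x^*$ give $\langle c_\theta,\hat{x}-x^*\rangle\le0$; combining this with Cauchy--Schwarz and the box assumption $\hat{x}_j,x^*_j\in[0,1]$ for all $j\in[m]$ (so $\norm{\hat{x}-x^*}_2\le\sqrt m$),
\[
\langle\hat{c},\hat{x}-x^*\rangle
=\langle c_\theta,\hat{x}-x^*\rangle+\langle\hat{c}-c_\theta,\hat{x}-x^*\rangle
\le\norm{\hat{c}-c_\theta}_2\,\norm{\hat{x}-x^*}_2
\le\sqrt{2\,h(\theta)}\cdot\sqrt m .
\]

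For the denominator --- the crux of the argument --- the margin is precisely what keeps $C$ away from the origin. Put $B:=\{i:x^*_i>0\}$ and $S:=\{i:x^*_i=0\}$. Complementary slackness gives $\hat{c}_B=(A_B)^T\nu$, which pins $\nu$ down up to the conditioning of $A_B$; substituting back, $\hat{c}_S=(A_S)^T\nu+\lambda|_S$ is coordinatewise at least $\margin-\norm{(A_S)^T\nu}_\infty$. So if $\norm{\hat{c}}_2$ were too small then $\norm{\nu}_2$, hence $\norm{(A_S)^T\nu}_\infty$, would be small, forcing every coordinate of $\hat{c}_S$ to exceed $\margin/2$ and thus $\norm{\hat{c}}_2\ge\norm{\hat{c}_S}_2\ge\margin/2$ --- a contradiction. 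Tracking constants gives $\norm{\hat{c}}_2\ge\delta$ with $\delta$ of order $\margin/\sqrt m$, the hidden constant depending only on the LP (singular values of the relevant submatrices of $A$; when $x^*$ is degenerate, also the smallest positive singular value of $(A_S)^T$ on $\ker(A_B)^T$, since then $\hat{c}_B$ only determines $\nu$ modulo $\ker(A_B)^T$). Combining, $\Gamma=\langle\hat{c},\hat{x}-x^*\rangle/\norm{\hat{c}}_2\le\sqrt{2m\,h(\theta)}/\delta$, as claimed. The numerator bound is routine; the obstacle is the denominator --- converting the qualitative fact that the margin excludes $c=0$ into a quantitative, dimension-tracked lower bound on $\mathrm{dist}(0,C)$ that also survives degeneracy of $x^*$.
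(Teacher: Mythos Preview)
Your numerator bound is correct and essentially identical to the paper's: both add the nonpositive term $\langle c_\theta,\hat x-x^*\rangle$ (by optimality of $\hat x$), apply Cauchy--Schwarz, use the box constraint to get $\|\hat x-x^*\|_2\le\sqrt m$, and identify $\|c_\theta-\hat c\|_2=\sqrt{2h(\theta)}$.

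Your denominator bound takes a genuinely different route. The paper does \emph{not} argue by contradiction on the KKT decomposition $\hat c=A^T\nu+\lambda$; instead it invokes the reduced-cost form of the margin, $c_j-\sum_{p\in B}c_p\,\tau_{pj}\ge\margin$ for each $j\in M$ (with $\tau_{pj}=[(A_B)^\dagger A_j]_p$), and applies a H\"older-type step: any inequality $c^T\alpha_j\ge\margin$ forces $\|c\|_1\ge\margin/\|\alpha_j\|_\infty$. Maximizing over $j$ and converting $\|c\|_1\to\|c\|_2$ via $\|c\|_2\ge\|c\|_1/\sqrt m$ is exactly where the $1/\sqrt m$ appears, and the paper obtains the explicit constant $\delta=(\margin/\sqrt m)\,\max_{j\in M}\min\{1,\min_{p\in B}1/|\tau_{pj}|\}$. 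Your contradiction argument is more structural and, in the non-degenerate case, actually yields a bound of order $\margin$ (not $\margin/\sqrt m$), since a single coordinate $\hat c_i>\margin/2$ already lower-bounds $\|\hat c\|_2$; you are understating your own conclusion to match the stated proposition. The trade-off is that the paper's route delivers an explicit $\delta$ in terms of the LP data, while yours leaves the constant implicit in the conditioning of $A_B$ and $A_S$. On degeneracy, both approaches are at the same level of rigor: the paper's reduced-cost identity $\lambda_M=c_M-c_B(A_B)^{-1}A_M$ is only derived for non-degenerate LPs (so $r(j)\ge\margin$ is not established when $|B|<n$), and you likewise flag but do not fully resolve the indeterminacy of $\nu$ modulo $\ker(A_B)^T$.
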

As the sub-optimality is upper-bounded by $O(\sqrt{h(\theta)})$, we can control it by controlling the loss $h(\theta)$. Putting together the results in~\cref{cor:interpolation,cor:no-interpolation} and~\cref{thm:subopt}, we observe that $T$ iterations of SGD result in the following bounds on the expected sub-optimality: 
$\E_{(z,x^*) \sim \rho} \left[\E_{\cD \sim \rho} \left[\E[\Gamma(\theta_T, (z, x^*))] \right] \right] = $ $O\left(\frac{\sqrt{2m}}{\delta} \left[\inf_{\theta} \hat{h}(\theta)+ \left[\frac{1}{N \mu} + \frac{1}{\mu^2 T}\right] \right]^{1/2} \right)$ in the general setting and $O\left(\frac{\sqrt{2m}}{\delta} \left[\exp(-\mu \, T) \right]^{1/2} \right)$ (independent of $N$) in the interpolation setting. Compared to these results,~\citet{sun2023maximum} derive an $O(\nicefrac{1}{\sqrt{N}})$ bound on the expected sub-optimality in terms of $\Gamma_{1}$ for both the interpolation and general settings. In the next section, we compare our method against several baselines on real-world and synthetic datasets and present the results.

\section{Experiments\protect\footnote{The code is available~\href{https://github.com/Saurabh-29/Inverse_Optimization_To_Feasibility_To_ERM}{here}}}
\label{sec:experiments}

\begin{figure}[ht]
\begin{center}
\centerline{\includegraphics[width=\columnwidth]{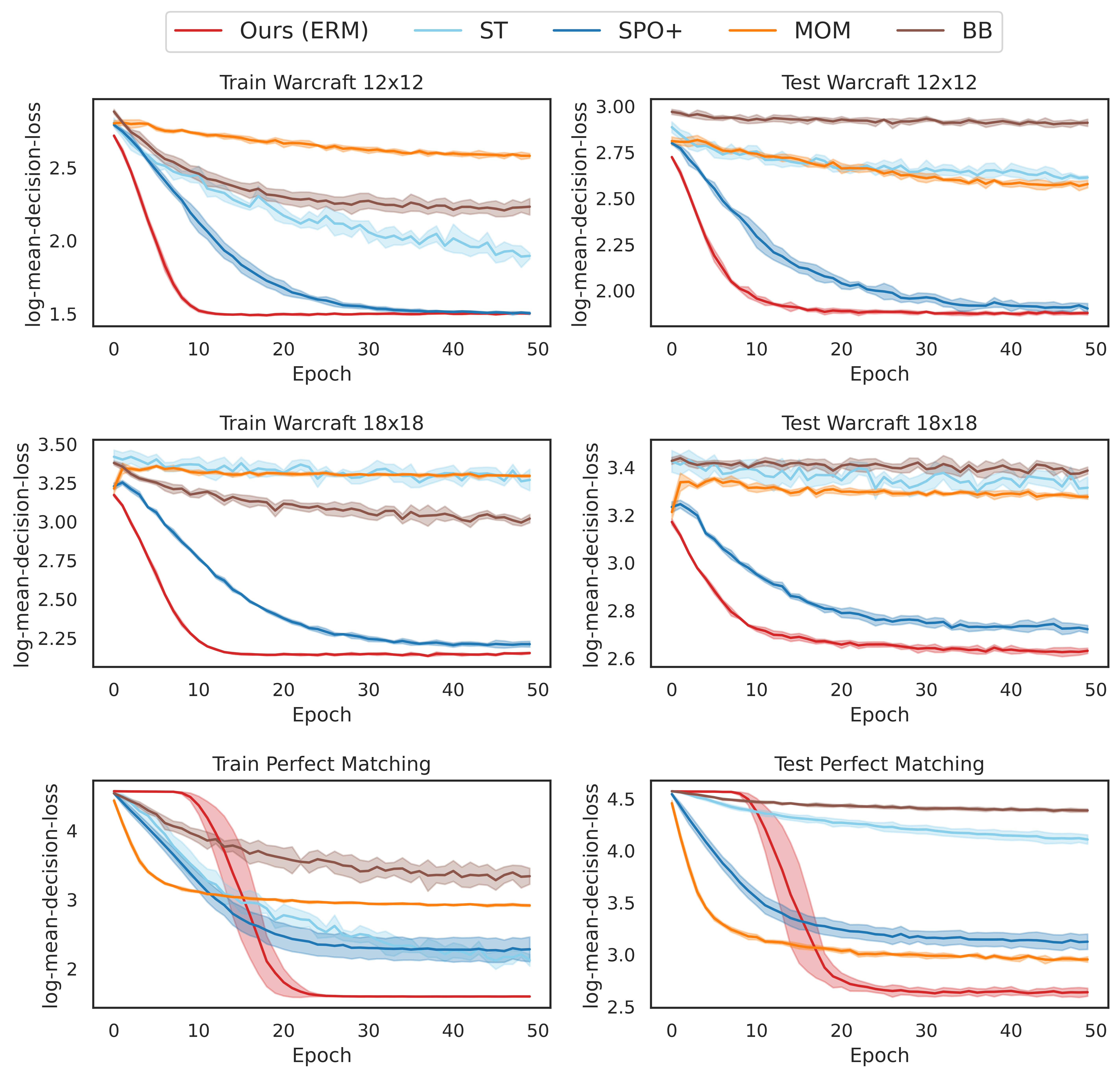}}
\caption{Decision loss: Training and Test plot for the real world experiments.  Our method significantly outperforms the other methods (ST, BB, MOM, SPO+).}
\label{fig:result_res_dec_err}
\end{center}
\end{figure}

\begin{figure}[ht]
\begin{center}
\centerline{\includegraphics[width=\columnwidth]{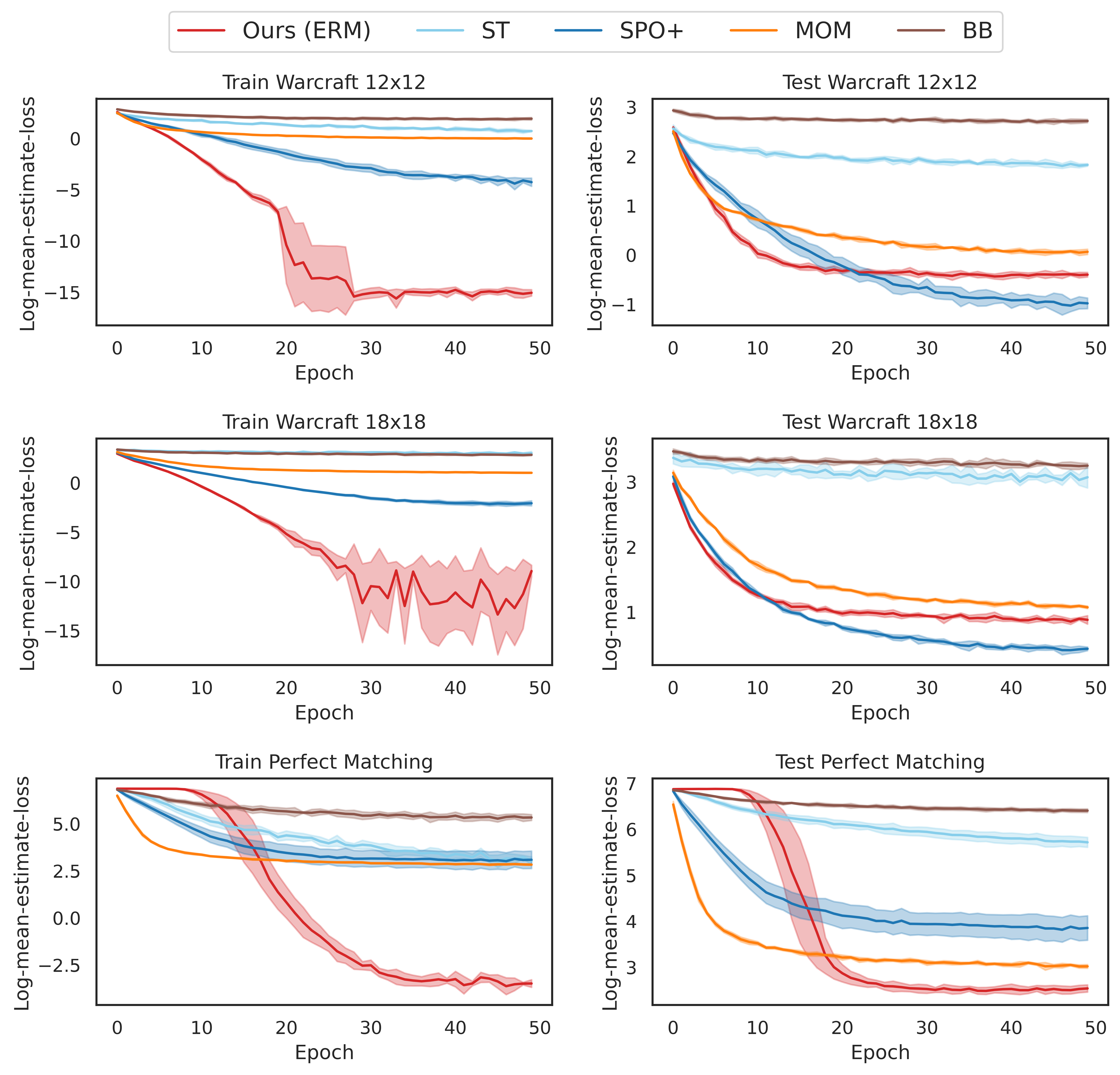}}
\caption{Estimate loss: training and test plots for real-world experiments.  Our method significantly outperforms existing methods (ST, BB, MOM) and is comparable to SPO+, which uses the knowledge of $c^*$.}
\label{fig:result_rw}
\end{center}
\end{figure}

\textbf{Datasets and Model}: To validate the effectiveness of our approach, we experiment with both synthetic and real-world benchmarks. We consider two real-world tasks~\citep{vlastelica2019differentiation} -- Warcraft Shortest Path and Perfect Matching below and defer the synthetic experiments to~\cref{app:synthetic-experiments}. 


\textbf{Warcraft Shortest Path (SP)}: The dataset consists of $(z,x^*)$ pairs where the input $z$ is an RGB image generated from the Warcraft II tileset. The output $x^*$ corresponds to the shortest path between given source-target pairs. The model predicts the edge weights for each tile in a $k \times k$ grid (where $k \in \{12, 18 \}$). Given these edge-weights, the optimization problem is to find the shortest path.

\textbf{Perfect Matching (PM)}: The dataset consists of $(z,x^*)$ pairs where the input $z$ is a grey-scale image consisting of MNIST digits on a $k \times k$ grid. The output $x^*$ is a matching for each digit to one of its neighbors on the grid. The model predicts the edge-weights between each pair of neighbouring digits. Given these edge-weights, the optimization problem is to find a matching that has the minimal cumulative weight of the selected edges. 

Both datasets consist of $10000$ training samples, $1000$ validation samples and $1000$ test samples each. For both SP and PM, we use Resnet-$18$ \cite{he2016deep} followed by a softplus function $s(x) = \log (1+ \exp{(x)})$  to ensure the predicted cost is non-negative. Please refer to~\cref{app:experiments-details} for additional details about the model and datasets. 

\textbf{Methods}: We compare the proposed method against several existing methods, including ST~\cite{sahoo2022backpropagation}, MOM \cite{sun2023maximum}, BB \cite{vlastelica2019differentiation},  QPTL ~\citet{wilder2019melding} and SPO+ \cite{bertsimas2020predictive}. We use adaptive first-order methods: AdaGrad~\citep{duchi2011adaptive} and Adam~\citep{kingma2014adam} to minimize the loss in~\cref{eq:train-loss} for our method and the corresponding losses for the other baselines. We train all the methods for $50$ epochs with a batch size of $100$.  We employ a grid search to find the best constant step size in $\{0.1, 0.05, 0.01, 0.005, 0.001, 0.0005, 0.0001, 0.00005\}$, across both the Adam and Adagrad optimizers. The optimal settings are determined based on performance on the validation set.  For optimal settings, we consider $5$ independent runs and plot the average result and standard deviation. Following~\citet{sun2023maximum}, we set $\margin=1$ for all our experiments. In~\cref{app:margin}, we also provide an ablation study varying $\margin$ in~\cref{tab:margin-ablation} and show that the algorithm is robust to $\margin$. We note that the QPTL approach is excluded from real-world experiments as it is prohibitively slow for large problems~\citep{amos2017optnet, geng2023rethinking}. For all the methods, we implement the LPs and QPs using the CVXPY library~\cite{diamond2016cvxpy}. For LPs, we use the ECOS solver ~\cite{domahidi2013ecos}, and for QPs, we use the OSQP solver ~\cite{stellato2020osqp}. 

\textbf{Metrics}: For each method, we plot the standard metrics: \emph{estimate-loss} and \emph{decision-loss} on both the train and test set, defined as:
\begin{align}
    \text{Estimate-Loss}(\theta) &= \sum_{i=1}^N \langle c_i^*, \hat{x}(f_{\theta}(z_i)) \rangle - \langle c_i^*, x_i^* \rangle \label{eqn:estimate-loss}\\
    \text{Decision-Loss}(\theta) &= \sum_{i=1}^N \|\hat{x}(f_{\theta}(z_i))  - x_i^* \|^2
    \label{eqn:decision-loss}
\end{align}
Since all the datasets consist of $(z_i, c_i^*, x_i^*)$ pairs where $x_i^* = \hat{x}(c_i^*)$, the estimate-loss and decision-loss are commonly used to measure performance in these tasks\footnote{Experimentally, we found that the sub-optimality in~\cref{eqn:subopt} has a similar trend as the estimate-loss.}. We note that SPO+ requires access to the ground-truth cost-vector $c^*$, while other methods, including ours, do not. Though the MOM method does not require access to $c^*$ in principle, the paper's implementation\footnote{See~\href{https://github.com/liushangnoname/Maximum-Optimality-Margin}{MOM code}} uses this ground-truth information to calculate the basis and use the LP optimality conditions. We continue using this information for MOM, thereby overestimating its performance. 

\textbf{Results}: In~\cref{fig:result_res_dec_err} w.r.t to the decision-loss, our method consistently outperforms the baselines by a considerable margin across tasks. In~\cref{fig:result_rw}, w.r.t to the estimate-loss, our method outperforms all the baselines except for SPO+ on the SP problem. In~\cref{app:runtime}, we plot the wall-clock time/epoch for all the methods, and observe that our method is comparable to the baselines and scales gracefully as the dimension and the number of training examples increase. These results demonstrate the strong empirical performance of our method compared to other baselines. 



\vspace{-2ex}
\section{Discussion}
\label{sec:discussion}
We presented a reduction of CIO to convex feasibility, which enabled us to guarantee linear convergence to the solution without additional assumptions such as degeneracy or interpolation. We further reduced it to ERM on a smooth, convex loss that satisfies the PL condition. This enabled us to use first-order optimizers and demonstrate strong empirical performance on real-world tasks while being computationally efficient. For future work, we aim to address the following areas: (1) since solving the QP takes a substantial amount of time, we plan to incorporate techniques from~\citet{lavington2023target} to allow for multiple updates to the model for every solve of the QP, (2) we intend to extend our framework to accommodate unknown constraints, broadening its applicability (3) finally, we aim to experiment with general non-linear convex objectives.

\clearpage

\section*{Impact Statement}
This paper presents work whose goal is to advance the field of Machine Learning. There are many potential societal consequences of our work, none of which we feel must be specifically highlighted here.

\section*{Acknowledgements}
We would like to thank Michael Lu, Anh Dang, Reza Babanezhad, Vibhuti Dhingra, Tarannum Khan, Karan Desai, Ashwin Samudre and Hardik Chauhan for helpful feedback on the paper. This research was partially supported
by the Natural Sciences and Engineering Research Council of
Canada (NSERC) Discovery Grant RGPIN-2022-04816. Anant Raj is supported by the a Marie SklodowskaCurie Fellowship (project NN-OVEROPT 101030817). 

\bibliography{ref}
\bibliographystyle{icml2024}

\clearpage
\appendix
\onecolumn

\newcommand{\appendixTitle}{%
\vbox{
    \centering
	\hrule height 4pt
	\vskip 0.2in
	{\LARGE \bf Supplementary material}
	\vskip 0.2in
	\hrule height 1pt 
}}

\appendixTitle
 
\section*{Organization of the Appendix}
\begin{itemize}

   \item[\ref{app:definitions}] \hyperref[app:definitions]{Definitions}
   
   \item[\ref{app:proofs}] \hyperref[app:proofs]{Theoretical Results}


    \item[\ref{app:synthetic-experiments}] \hyperref[app:synthetic-experiments]{Synthetic Experimental Results}
      
   \item[\ref{app:experiments-details}] \hyperref[app:experiments-details]{Additional Real-world Experiment Details}
   
\end{itemize}

\section{Definitions}
\label{app:definitions}
If the function $f$ is differentiable and $L$-smooth, then for all $v$ and $w$, 
\begin{align}
f(v) & \leq f(w) + \inner{\nabla f(w)}{v - w} + \frac{L}{2} \normsq{v - w},
\tag{Smoothness}
\label{eq:individual-smoothness}
\end{align}

If $f$ is convex, then for all $v$ and $w$,
\begin{align}
f(v) & \geq f(w) + \inner{\nabla f(w)}{v - w},
\tag{Convexity}
\label{eq:convexity}
\end{align}

If $f$ is $\mu$ strongly-convex, then for all $v$ and $w$,
\begin{align}
f(v) & \geq f(w) + \inner{\nabla f(w)}{v - w} + \frac{\mu}{2} \normsq{v - w} 
\tag{Strong Convexity}
\label{eq:strong-convexity}
\end{align}
\section{Theoretical Results}
\label{app:proofs}

\subsection{Generalizing our method to other classes of optimization problem:}
\label{app:gen_kkt_inverse}

In this section, we relax our assumption on the class of problem from LP to non-linear convex optimization objectives and constraints. We specify the condition on the objective for our reduction to hold; there is no extra condition (apart from convexity) on optimization constraints for reduction to hold. 
Similar to ~\cref{sec:feasibility-reduction}, we consider a single data-point  $(z, x^*) \in \mathcal{D}$ with $\lvert\mathcal{D}\lvert=N$, where $z \in \mathbb{R}^{d}, x^* \in \mathbb{R}^{m}$ we aim to find a $c \in \mathbb{R}^{p} $ such that $\hat{x}(c) = x^*$. Consider a non-linear convex optimization problem defined as :
\begin{equation}   
\begin{aligned}
    \hat{x}(c) &= \, \underset{x}{\arg \min} \,  \phi(x,c)\\
    \text{subject to}\qquad 
    & G(x)=0 \\ 
    & H(x) \leq0
\label{eqn: gen_opt_problem}
\end{aligned}
\end{equation}

where $G(x), H(x)$ consist of $k, l$ convex constraints respectively and,  $\phi(x,c), G_i(x), H_i(x)$ are (non-linear) convex function with respect to parameter $x$. 


\textit{Assumptions:} For this reduction to hold, the condition we have is $\frac{\partial \phi(c, x)}{\partial x}|_{x=x^*}$ should be convex. \textit{Example:} For LPs, the condition $\frac{\partial <c, x> }{\partial x}|_{x=x^*}$ simplifies to $c$ which is convex in $c$.

Writing the KKT optimality conditions for ~\cref{eqn: gen_opt_problem}, we get: 
\begin{align*}
    \frac{\partial \phi(x,c)}{\partial x} + 
    \frac{\nu \cdot \partial{G(x)}}{\partial x} + \frac{\lambda \cdot \partial{H(x)}}{\partial x}  &\in 0 \tag{stationary condition}\\
    \lambda &\geq 0 \tag{dual feasibility}\\
    \lambda \cdot H(x) &= 0 \tag{complementary slackness}\\
    G(x) &= 0 \tag{primal feasibility}\\
    H(x) &\leq 0 \tag{primal feasibility}
    \label{eqn: gen_kkt_conditions}
\end{align*}
where $\lambda \in \mathbb{R}^{l}, \nu \in \mathbb{R}^{k}$  are referred to as the dual-variables. The equations, $G(x)= 0, H(x) \leq 0$ come from problem definition. The first term is derived by differentiating the Lagrangian. The term $\lambda \cdot H(x) = 0$ represents the complementary slackness condition.

Morever, after substituing the value of $x^*$ in ~\cref{eqn: gen_kkt_conditions}, we get:

\begin{equation}
\begin{aligned}
 {\frac{\partial \phi(x,c)}{\partial x}}\Big|_{x=x^*} + 
    \frac{\nu \cdot \partial{G(x)}}{\partial x}\Big|_{x=x^*} + \frac{\lambda \cdot \partial{H(x)}}{\partial x}\Big|_{x=x^*}   &\in 0 \\
    \lambda &\geq 0 \\
    \lambda \cdot H(x^*) &= 0 
    \label{eqn: gen_kkt_condition_init} 
\end{aligned}
\end{equation}

Now, for a given optimal decision $x^*$, $H(x^*), G(x^*)$ are inherently satisfied. Thus, we omit them from the ~\cref{eqn: gen_kkt_condition_init}.
The term $ \frac{\partial G(x)}{\partial x}|_{x=x^*} $ represents gradient of $G(x)$ taken w.r.t $x$ and evaluated at $x^*$.

From our assumption, ${\frac{\partial \phi(x,c)}{\partial x}}\big|_{x=x^*}$ is convex in $c$ and as $\lambda, \nu$ are multiplied with constants. Thus the~\cref{eqn: gen_kkt_condition_init} is convex in $c, \lambda, \nu$.

Therefore, the feasible set $C$ encompassing all the values of $c$ s.t. $\hat{x}(c) = x^*$ can be written as:

\begin{equation}
    C = \Bigl\{c\,\, |\, \exists\, \lambda, \nu \, \text{ s.t. } \, {\frac{\partial \phi(x,c)}{\partial x}}\Big|_{x=x^*} + 
    \frac{\nu \cdot \partial{G(x)}}{\partial x}\Big|_{x=x^*} + \frac{\lambda \cdot \partial{H(x)}}{\partial x}\Big|_{x=x^*}  \in 0, \; \lambda \geq 0, \; \lambda \cdot H(x^*) = 0  \Bigl\}
\end{equation}

Moreover, the projection of point $\hat{c}$ onto set $C$ can be attained by solving the below QP:
\begin{align}
    q(\hat{c}) &= \,  \underset{c}{\arg \min} \, \|c - \hat{c} \|^2 \\
   \text{subject to}\qquad
   &{\frac{\partial \phi(x,c)}{\partial x}}\Big|_{x=x^*} + 
    \frac{\nu \cdot \partial{G(x)}}{\partial x}\Big|_{x=x^*} + \frac{\lambda \cdot \partial{H(x)}}{\partial x}\Big|_{x=x^*}  \in 0 \\
    &\lambda \geq 0 \\
    &\lambda \cdot H(x^*) = 0 
\end{align}

Thus, our framework can be used for a non-linear convex class of functions where the first-order KKT conditions are both sufficient and necessary and $ \frac{\partial \phi(x,c)}{\partial x}|_{x=x^*}$ is convex in $c$. Moreover, we do not assume any additional condition on constraints $G, H$ apart from convexity.  

\subsection{Equivalence between margin in KKT formulation and ~\cite{sun2023maximum}}
\label{app:kkt_equiv}
In this section, we derive the equivalent margin for the KKT formulation as in ~\citet{sun2023maximum}. We further show that our margin formulation can be extended to the degenerate LPs. Additionally, our margin formulation does not require specific handling in the case of degenerate/non-degenerate cases.


First, we will derive the same margin formulation as in ~\cite{sun2023maximum} for non-degenerate LPs in terms of KKT conditions.

In the case of non-degenerate LP, we denote $B$ as the basis set defined as $B := \{ j \mid x^*_j > 0 \}$ and $M$ as the set of indices not in $B$, i.e. $M = [n] - B$. $A_B$ represents the columns corresponding to the indices in set $B$, is invertible by definition. The reduced cost is given as $c_M - c_B (A_B)^{-1}A_M  \geq 0$ from LP optimality conditions. And to add margin $\margin$ in the reduced cost optimality conditions, ~\cite{sun2023maximum} proposed the following modification: $c_M - c_B (A_B)^{-1}A_M  \geq \margin$.

Recall the the KKT conditions for LP:
\begin{align}
    \nu^T A -c+ \lambda &=0  \label{eqn: kkt_lp_stattionary}\\
    \lambda &\geq 0 \label{eqn: kkt_lp_dual}\\
    \lambda \cdot x^* &=0 \label{eqn: kkt_lp_complementary}
\end{align}

From KKT conditions, we have the condition $\nu^T A -c+ \lambda =0$. Separating it row-wise for index $B, M$ respectively for matrix $A$ , we get: 
\begin{align}
\label{eqn:eqn_a21}
    \nu^T A_B -c_B + \lambda_B &=0\\
    \nu^T A_M -c_M + \lambda_M &=0 \label{eqn: M-index}
\end{align}
where $A_B, A_M$ are the the corresponding columns of matrix $A$ defined by set $B$ and $M$ respectively. 

Now, from ~\cref{eqn: kkt_lp_complementary}, we have $\lambda \cdot x^* =0$. For non-degenerate LPs, we have $x_B >0$; this implies that $\lambda_B=0$. Substituting this value in ~\cref{eqn:eqn_a21}, we get:
\begin{align}
   \label{eqn:eqn_a22}
    \nu^T &= c_B(A_B)^{-1} \tag{substituting $\lambda_B=0$}\\
    \lambda_M &= c_M - \nu^T A_M   \tag{rearranging~\cref{eqn: M-index}}\\
    \lambda_M &= c_M -  c_B(A_B)^{-1}A_M \label{eqn: kkt_reduced_cost}
\end{align}

From ~\cref{eqn: kkt_lp_dual}, we have $\lambda\geq 0$ that implies $\lambda_M \geq 0$. Thus, in ~\cref{eqn: kkt_reduced_cost}, we retrieve the reduced cost optimality condition from KKT formulation. Therefore, the equivalent of $c_M -  c_B(A_B)^{-1}A_M \geq \margin$ to the KKT formulation would be to impose the same constraint on $\lambda_M$, i.e. $\lambda_M \geq \margin$.

\subsubsection{Extension to degenerate case:}
In the case of degenerate LPs, $\exists i \in B$ s.t. $x_i =0$. Moreover, the set $B$ is no longer unique. Our experiments found that a choice of $B$ affects the results in ~\cite{sun2023maximum}.

Here, let us define a separate set of basis $B_d, M_d$  as: $B_d := \{ i \mid x^*_i > 0 \}$, $M_d := \{ i \mid x^*_i = 0 \}$. Note that $B_d, M_d$ differ from the standard definition of basis in LPs. 

For the degenerate case, we propose the following modification for margin, $\lambda_{M_d} \geq \margin$. This can be written as: 
\begin{equation}
    \forall i \in [n]; \;\; \lambda_i \mathcal{I}\{x^*_i = 0\} \geq \margin
    \label{eqn: kkt_margin_deg}
\end{equation}
Note that margin modification in ~\cref{eqn: kkt_margin_deg} is exactly the same as in the non-degenerate case. Thus, unifying the margin formulation for the degenerate and non-degenerate LPs. Moreover, this also resolves the problem of determining the basis for degenerate LPs.





\subsection{Proof of ~\cref{thm:smooth-convex}}
\label{proof:smooth-convex}
\propsmooth*
To prove that the loss function $h(\theta)$ is a smooth, convex function for a linear model, we define $\zeta_i({\theta})$ such that, 
\begin{align}
h(\theta) = \frac{1}{N}\sum_{i=1}^N \zeta_i(\theta)  \text{ where } \zeta_i({\theta}) & := \frac{1}{2} \min_{q \in C_i} \|z_i \theta - q\|^2 = \frac{1}{2}\  d^2(z_i \theta, C_i) \,,
\label{eq:zeta-def}
\end{align}
where $d^2(z_i\theta, C_i)$ represents squared Euclidean distance of a point $z_i\theta$ from set $C_i$ and $C_i$ represents the set of feasible cost vectors (defined in~\cref{eq:c-set-def}) for corresponding solution $x_i^*$. 

We will prove that for an arbitrary $z$, $\zeta(\theta) :=\frac{1}{2} \min_{q \in C} \|z \theta - q\|^2$ is 1-smooth and convex. This will prove the desired statement since the mean of 1-smooth convex functions is 1-smooth and convex. 

For this, we define 
\begin{align}
\psi({\theta}) & := \min_{c \in C} \|z\theta - c\| = d(z\theta, C) \,,
\label{eq:psi-def}
\end{align}
where $d(z\theta, C)$ represents Euclidean distance of a point $z\theta$ from set $C$ and $C$ represents the set of feasible cost vectors (defined in~\cref{eq:c-set-def}) for corresponding solution $x^*$. We will first prove that $\psi({\theta})$ is $1$-Lipschitz and convex and use that to prove the smoothness and convexity of $\zeta(\theta)$. 

\begin{lemma}
For $\|z\| \leq 1$, loss  $\psi({\theta})$ is $1$-Lipschitz.
\label{lemma: 1-lipchitz}
\end{lemma}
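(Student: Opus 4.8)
The plan is to show that $\psi(\theta) = d(z\theta, C) = \min_{c \in C}\|z\theta - c\|$ is $1$-Lipschitz in $\theta$ by using the standard fact that the distance function to any nonempty closed convex set is $1$-Lipschitz in its argument, composed with the linear map $\theta \mapsto z\theta$, which is nonexpansive when $\|z\| \le 1$.

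\begin{proof}
For any two parameters $\theta_1, \theta_2 \in \R^{d \times m}$, let $c_2 \in C$ be a point achieving the minimum in the definition of $\psi(\theta_2)$, i.e. $\psi(\theta_2) = \|z\theta_2 - c_2\|$ (such a minimizer exists since $C$ is nonempty, closed, and convex). Since $c_2 \in C$, it is a feasible competitor in the minimization defining $\psi(\theta_1)$, so
\begin{align*}
\psi(\theta_1) \leq \|z\theta_1 - c_2\| \leq \|z\theta_1 - z\theta_2\| + \|z\theta_2 - c_2\| = \|z(\theta_1 - \theta_2)\| + \psi(\theta_2),
\end{align*}
where we used the triangle inequality. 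Now $\|z(\theta_1 - \theta_2)\| \leq \|z\| \, \|\theta_1 - \theta_2\| \leq \|\theta_1 - \theta_2\|$ by the assumption $\|z\| \le 1$ (using the operator-norm/Frobenius bound $\|zM\| \le \|z\|\|M\|$). Hence $\psi(\theta_1) - \psi(\theta_2) \leq \|\theta_1 - \theta_2\|$. Exchanging the roles of $\theta_1$ and $\theta_2$ gives $\psi(\theta_2) - \psi(\theta_1) \leq \|\theta_1 - \theta_2\|$, and therefore
\begin{align*}
|\psi(\theta_1) - \psi(\theta_2)| \leq \|\theta_1 - \theta_2\|,
\end{align*}
which is exactly the claim that $\psi$ is $1$-Lipschitz.
\end{proof}

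The only subtlety worth flagging is the interplay of norms: $z\theta$ is a vector in $\R^m$, so the outer norm is Euclidean, while $\theta_1 - \theta_2$ is a matrix, so one must be consistent about whether $\|z\| \le 1$ refers to the Euclidean norm of the row vector $z$ (which it does here, since $z_i \in \R^{1\times d}$) and read $\|z(\theta_1-\theta_2)\| \le \|z\|\|\theta_1-\theta_2\|$ with $\|\cdot\|$ the Frobenius norm on the matrix factor; this is the one place to be careful, but it is entirely routine. The substantive content — that the $\min$ over $C$ only helps when you move the argument — is captured by the one-line "feasible competitor" argument above, so there is no real obstacle here; the lemma is the easy warm-up step that will then be leveraged to establish smoothness and convexity of $\zeta(\theta)$ (via the Moreau-envelope-type identity $\zeta = \tfrac12\psi^2$ and the fact that $\psi$ is convex, which is proved similarly).
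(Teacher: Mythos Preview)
Your proof is correct and follows essentially the same approach as the paper: both arguments pick the minimizer $c_2$ (the projection of $z\theta_2$ onto $C$), use it as a feasible competitor in the infimum defining $\psi(\theta_1)$, apply the triangle inequality, and then bound $\|z(\theta_1-\theta_2)\|$ by $\|z\|\,\|\theta_1-\theta_2\|\le\|\theta_1-\theta_2\|$. The paper merely factors this into two steps (first proving $|d(x,C)-d(y,C)|\le d(x,y)$ abstractly, then composing with $\theta\mapsto z\theta$), but the content is identical.
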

\begin{proof}: 
We first prove that the projection of a point onto a closed convex set $C$ is non-expansive. Consider two arbitrary points $x, y \in \mathbb{R}^d$ for this. Now, for a point $p \in C$ s.t. $p$ is the projection of $y$ on $C$, we know that $d(y, C) = d(y, p)$ and $d(x, C) \leq d(x,p)$. 

\begin{align}
    d(x, C) & \leq d(x,p) \leq d(x,y) + d(y,p) \tag{Triangle inequality}    
    = d(x,y) + d(y, C) \\
    \implies d(x, C) &- d(y, C) \leq d(x,y)
    \label{eqn: x_c_x_y}
\end{align}
Similarly, now consider point $q \in C$ s.t. $q$ is the projection of $x$ on C, we know $d(x, C) = d(x,q)$ and $d(y, C) \leq d(y, q)$. For the same reason, 
\begin{align}
    d(y, C) &\leq d(y,q) \leq d(y,x) + d(x,C) = d(y,x) + d(x, C)\\ 
    \implies d(y, C) &- d(x, C) \leq d(x,y)
    \label{eqn: y_c_x_y}
\end{align}

As, $d(y, x) = d(x,y)$, from ~\cref{eqn: x_c_x_y}, ~\cref{eqn: y_c_x_y}, we get $|d(y, C) - d(x,C)| \leq d(x,y) = \|x-y\| $. Thus, the projection to a convex closed set $C$ is non-expansive.

Now consider two points $\theta_1, \theta_2$ for function $\psi$, 
\begin{align}
    \|\psi(\theta_1) - \psi(\theta_2) \| & = \| d(z\theta_1, C)- d(z\theta_2, C) \| \tag{By definition of $\psi$} \\
     &\leq d(z\theta_1, z\theta_2) \tag{From the above relation} \\
     &= \| z\theta_1- z\theta_2\|  \\
     &\leq \|z\|\|\theta_1- \theta_2 \|  \tag{Cauchy-Schwartz} \\ 
     &\leq 1\|\theta_1- \theta_2 \| \tag{Since $\norm{z} \leq 1$ by assumption}
\end{align}
\end{proof}

\begin{lemma}
Loss function $\psi(\theta)$ is convex.
\label{lemma: psi_convex}
\end{lemma}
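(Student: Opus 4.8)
The statement to prove is that $\psi(\theta) = \min_{c \in C} \|z\theta - c\| = d(z\theta, C)$ is convex in $\theta$, where $C$ is a convex set (it is affine/convex in the cost vector, as established in the discussion around~\cref{eq:c-set-def}). The plan is to use the standard fact that the distance function $d(\cdot, C)$ to a nonempty convex set is a convex function on $\mathbb{R}^m$, and then observe that $\psi$ is the composition of this convex function with the \emph{linear} map $\theta \mapsto z\theta$, so convexity is preserved.

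\textbf{Key steps.} First I would show $u \mapsto d(u, C)$ is convex on $\mathbb{R}^m$. Fix $u_1, u_2 \in \mathbb{R}^m$ and $t \in [0,1]$. Let $c_1, c_2 \in C$ be the projections (or near-minimizers) of $u_1, u_2$ onto $C$, so $d(u_i, C) = \|u_i - c_i\|$. Since $C$ is convex, $t c_1 + (1-t) c_2 \in C$, hence
\begin{align*}
d(t u_1 + (1-t) u_2, C) &\leq \|t u_1 + (1-t) u_2 - (t c_1 + (1-t) c_2)\| \\
&= \|t(u_1 - c_1) + (1-t)(u_2 - c_2)\| \\
&\leq t \|u_1 - c_1\| + (1-t)\|u_2 - c_2\| \\
&= t\, d(u_1, C) + (1-t)\, d(u_2, C),
\end{align*}
using the triangle inequality (sublinearity of the norm). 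This establishes convexity of $d(\cdot, C)$. Second, the map $g: \theta \mapsto z\theta$ is linear (affine) in $\theta$, and the composition of a convex function with an affine map is convex: for any $\theta_1, \theta_2$ and $t \in [0,1]$, $\psi(t\theta_1 + (1-t)\theta_2) = d(z(t\theta_1 + (1-t)\theta_2), C) = d(t z\theta_1 + (1-t) z\theta_2, C) \leq t\, d(z\theta_1, C) + (1-t)\, d(z\theta_2, C) = t\, \psi(\theta_1) + (1-t)\, \psi(\theta_2)$. This completes the proof.

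\textbf{Main obstacle.} There is essentially no hard step here; the only point requiring a little care is ensuring that the projections $c_1, c_2$ exist and lie in $C$ — this holds because $C$ is a nonempty closed convex set (it is defined by finitely many affine equalities/inequalities in $(c, \lambda, \nu)$ projected onto the $c$-coordinates, which remains closed and convex), so the projection onto $C$ is well-defined. If one wanted to avoid invoking existence of the exact projection, the same argument goes through with $\epsilon$-near-minimizers and a limiting argument, but since $C$ is closed and convex this is unnecessary. The crux is simply combining convexity of $C$ (to place the convex combination of the witnesses back in $C$) with the triangle inequality for the norm.
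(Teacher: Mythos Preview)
Your proof is correct and follows essentially the same approach as the paper: both arguments use the projections $c_1, c_2$ of $z\theta_1, z\theta_2$ onto $C$, invoke convexity of $C$ to place $t c_1 + (1-t)c_2$ in $C$, and apply the triangle inequality. The only cosmetic difference is that you factor the argument into ``$d(\cdot,C)$ is convex'' followed by ``composition with the linear map $\theta\mapsto z\theta$,'' whereas the paper runs the inequality chain directly on $\psi$.
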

\begin{proof}
Consider two parameter values $\theta_1, \theta_2 $ and let the projection of $z\theta_1$, $z\theta_2$ on $C$ be $c_1$, $c_2$ respectively. Additionally, consider $\theta_3$ to be the convex combination of $\theta_1, \theta_2$ i.e. $\theta_3 := \lambda \theta_1 + (1-\lambda) \theta_2$ for a arbitrary $\lambda \in (0,1)$. 
 
Now, we can write:
 \begin{align}
     \lambda \psi(\theta_1) + (1-\lambda) \psi(\theta_2) & =  \lambda \| z\theta_1-c_1 \| + (1-\lambda) \| z\theta_2-c_2 \|  \\
     &\geq  \| \lambda ( z\theta_1-c_1)  + (1-\lambda)(z\theta_2-c_2) \|  \tag{Triangle Inequality} \\
     &= \| \lambda \, z\theta_1 + (1-\lambda) \, z\theta_2 -  \lambda c_1 - (1-\lambda) c_2\| \\
     &= \| \lambda z\theta_1 + (1-\lambda) z\theta_2 -  c\| \tag{$c := \lambda c_1 + (1-\lambda) c_2$} \\
     &= \| z\theta_3 -c\| \tag{By definition of $\theta_3$} \\
     &= d(z\theta_3, c) \geq d(z\theta_3, C) \tag{Since $C$ is convex, $c \in C$} \\
     & = \psi(\theta_3) \tag{By definition of $\psi$} \\
    \implies \lambda \psi(\theta_1) + (1-\lambda) \psi(\theta_2) &\geq  \psi(\theta_3)\\
    \implies \lambda \psi(\theta_1) + (1-\lambda) \psi(\theta_2) &\geq  \psi(\lambda \theta_1 + (1-\lambda)\theta_2)   
 \end{align}
Thus, the function $\psi$ is convex from the definition of convexity. 
\end{proof}

\clearpage
\begin{lemma}
For $\|z\| \leq 1$, loss  $\zeta({\theta})$ is $1$-smooth.
\end{lemma}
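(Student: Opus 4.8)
The plan is to view $\zeta(\theta)=\tfrac12\,d^2(z\theta,C)$ as the composition $\zeta=g\circ(\theta\mapsto z\theta)$, where $g\colon\R^m\to\R$, $g(u):=\tfrac12\,d^2(u,C)$, is the squared Euclidean distance to the closed convex set $C$ (a Moreau envelope). Since the arithmetic mean of $1$-smooth functions is $1$-smooth, it suffices to show $\zeta$ is $1$-smooth for an arbitrary $z$ with $\norm{z}\le1$, and this reduces to showing that $g$ is $1$-smooth (has a $1$-Lipschitz gradient) and then pulling this back through the linear map using $\norm{z}\le1$. I would deliberately \emph{not} try to differentiate $\psi=\sqrt{2\zeta}$: as explained below, a convex $1$-Lipschitz function need not have a $1$-smooth square, so \cref{lemma: 1-lipchitz} and \cref{lemma: psi_convex} alone do not suffice and the distance-function structure must be used.

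First I would establish the standard Moreau-envelope fact that $g$ is differentiable everywhere with $\nabla g(u)=u-\mathcal{P}_{C}(u)$. The upper bound uses only feasibility of the projected point: for any $u_1,u_2$, put $c^\star:=\mathcal{P}_{C}(u_2)\in C$, so $g(u_2)=\tfrac12\normsq{u_2-c^\star}$ and $g(u_1)\le\tfrac12\normsq{u_1-c^\star}$ because $c^\star$ is a candidate in the minimization defining $g(u_1)$; expanding $\tfrac12\normsq{u_1-c^\star}=\tfrac12\normsq{u_2-c^\star}+\langle u_2-c^\star,\,u_1-u_2\rangle+\tfrac12\normsq{u_1-u_2}$ gives $g(u_1)\le g(u_2)+\langle u_2-\mathcal{P}_{C}(u_2),\,u_1-u_2\rangle+\tfrac12\normsq{u_1-u_2}$. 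The matching lower bound $g(v)\ge g(u)+\langle u-\mathcal{P}_{C}(u),\,v-u\rangle$ follows from the obtuse-angle (variational) inequality $\langle v'-\mathcal{P}_{C}(u),\,u-\mathcal{P}_{C}(u)\rangle\le0$ for all $v'\in C$, applied with $v'=\mathcal{P}_{C}(v)$, so $u-\mathcal{P}_{C}(u)$ is a subgradient of $g$ at $u$; the two bounds sandwich the remainder to $o(\norm{v-u})$, forcing differentiability with $\nabla g(u)=u-\mathcal{P}_{C}(u)$.

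Next I would show $\nabla g$ is $1$-Lipschitz. Using firm non-expansiveness of the projection, $\langle\mathcal{P}_{C}(u_1)-\mathcal{P}_{C}(u_2),\,u_1-u_2\rangle\ge\normsq{\mathcal{P}_{C}(u_1)-\mathcal{P}_{C}(u_2)}$ (a strengthening of the non-expansiveness already proved in \cref{lemma: 1-lipchitz}), expand $\normsq{\nabla g(u_1)-\nabla g(u_2)}=\normsq{(u_1-u_2)-(\mathcal{P}_{C}(u_1)-\mathcal{P}_{C}(u_2))}=\normsq{u_1-u_2}-2\langle u_1-u_2,\,\mathcal{P}_{C}(u_1)-\mathcal{P}_{C}(u_2)\rangle+\normsq{\mathcal{P}_{C}(u_1)-\mathcal{P}_{C}(u_2)}\le\normsq{u_1-u_2}-\normsq{\mathcal{P}_{C}(u_1)-\mathcal{P}_{C}(u_2)}\le\normsq{u_1-u_2}$, so $g$ is $1$-smooth. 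Finally, by the chain rule $\nabla\zeta(\theta)=z^\top\!\bigl(z\theta-\mathcal{P}_{C}(z\theta)\bigr)$, hence $\norm{\nabla\zeta(\theta_1)-\nabla\zeta(\theta_2)}\le\norm{z}\,\norm{\nabla g(z\theta_1)-\nabla g(z\theta_2)}\le\norm{z}\,\norm{z\theta_1-z\theta_2}\le\norm{z}^2\norm{\theta_1-\theta_2}\le\norm{\theta_1-\theta_2}$, using $\norm{z}\le1$ twice; averaging the $\zeta_i$'s then yields $1$-smoothness of $h$. (Alternatively, the descent inequality for $\zeta$ can be obtained in one shot from feasibility of $\mathcal{P}_{C}(z\theta_2)$, bypassing $g$, but differentiability of $\zeta$ still has to be argued as above.)

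I expect the main obstacle to be precisely the clean justification that $g$ is genuinely differentiable (not merely subdifferentiable) with gradient $u-\mathcal{P}_{C}(u)$ — the tempting shortcut of squaring the already-established convex $1$-Lipschitz function $\psi$ fails in general, e.g.\ $\psi(x)=1+\abs{x}$ is convex and $1$-Lipschitz but $\tfrac12\psi^2$ has a jump in its derivative at $0$. So the argument must genuinely exploit the distance-function structure (feasibility of $\mathcal{P}_{C}(u_2)$ and the obtuse-angle/firm-non-expansiveness properties of the projection) rather than invoking \cref{lemma: 1-lipchitz} and \cref{lemma: psi_convex} as black boxes; the remaining steps are short expansions of squared norms.
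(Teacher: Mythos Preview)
Your proposal is correct and follows a genuinely different route than the paper. The paper writes $\norm{\nabla\zeta(\theta_1)-\nabla\zeta(\theta_2)}=\norm{z^\top[(z\theta_1-c_1)-(z\theta_2-c_2)]}$, peels off $\norm{z}$ by Cauchy--Schwarz, and then replaces $\norm{(z\theta_1-c_1)-(z\theta_2-c_2)}$ by $\abs{d(z\theta_1,C)-d(z\theta_2,C)}=\abs{\psi(\theta_1)-\psi(\theta_2)}$, so that \cref{lemma: 1-lipchitz} finishes. That replacement, however, is not justified: by the reverse triangle inequality the norm of a difference of vectors is in general \emph{at least} the difference of their norms, not at most, so the inequality runs the wrong way (e.g.\ with $C=\{0\}$ in $\R^2$, $u_1=(1,0)$, $u_2=(0,1)$ one gets $\sqrt{2}$ versus $0$). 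What is actually needed at that spot is precisely $\norm{(u_1-\mathcal{P}_C(u_1))-(u_2-\mathcal{P}_C(u_2))}\le\norm{u_1-u_2}$, i.e.\ $1$-Lipschitzness of $u\mapsto u-\mathcal{P}_C(u)$, which you establish via firm non-expansiveness of the projection. Your caveat that a convex $1$-Lipschitz $\psi$ need not yield a $1$-smooth $\tfrac12\psi^2$ is exactly on point: the paper's black-box use of \cref{lemma: 1-lipchitz} is insufficient as written, and your Moreau-envelope argument (differentiability of $g$ with $\nabla g(u)=u-\mathcal{P}_C(u)$, then firm non-expansiveness, then composition with the linear map using $\norm{z}\le1$) is the standard rigorous proof that closes the gap.
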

\begin{proof}

Consider two parameter values $\theta_1, \theta_2$ and lets denote the projection of $z\theta_1$ and $z\theta_2$ on $C$ as $c_1$ and $c_2$ respectively. 
\begin{align}
    \left \|\nabla \zeta(\theta_1) - \nabla \zeta(\theta_2) \right \| &= \left \|\nabla \left(\frac{1}{2} \|z\theta_1 -c_1 \|^2 \right) - \nabla \left(\frac{1}{2} \|z\theta_2 -c_2 \|^2 \right)  \right \| \tag{By definition of $\zeta$} \\
    &= \|z^T (z\theta_1 -c_1)  - z^T(z\theta_2 -c_2) \|  \\
    &\leq \|z\| \|d(z\theta_1, C)- d(z\theta_2, C) \| \tag{Cauchy Schwarz} \\
    &\leq \|\psi(\theta_1) - \psi(\theta_2) \| \tag{Since $\norm{z} \leq 1$ by assumption and by definition of $\psi$} \\
    & \leq \|\theta_1 - \theta_2 \| \tag{\cref{lemma: 1-lipchitz}}
\end{align}
Thus, the function $\zeta(\theta)$ is $1$-smooth.
\end{proof}

\begin{lemma}
Loss  $\zeta({\theta})$ is convex.
\label{thm:h2_cnvx}
\end{lemma}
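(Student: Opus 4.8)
The plan is to exploit the identity $\zeta(\theta) = \tfrac12\,\psi(\theta)^2$, where $\psi(\theta) = d(z\theta, C)$ was already shown to be convex (\cref{lemma: psi_convex}) and is nonnegative because it is a distance. The statement then follows from the elementary fact that the square of a nonnegative convex function is convex: concretely, for $\lambda \in (0,1)$ and any $\theta_1,\theta_2$,
\aligns{
\zeta(\lambda\theta_1 + (1-\lambda)\theta_2)
&= \tfrac12\,\psi(\lambda\theta_1 + (1-\lambda)\theta_2)^2 \\
&\leq \tfrac12\,\big(\lambda\,\psi(\theta_1) + (1-\lambda)\,\psi(\theta_2)\big)^2 \\
&\leq \tfrac12\,\big(\lambda\,\psi(\theta_1)^2 + (1-\lambda)\,\psi(\theta_2)^2\big)
= \lambda\,\zeta(\theta_1) + (1-\lambda)\,\zeta(\theta_2),
}
where the first inequality uses convexity of $\psi$ together with the fact that $t \mapsto t^2$ is nondecreasing on $[0,\infty)$ (so it preserves the inequality $\psi(\lambda\theta_1 + (1-\lambda)\theta_2) \le \lambda\psi(\theta_1)+(1-\lambda)\psi(\theta_2)$ between nonnegative quantities), and the second inequality is convexity of $t\mapsto t^2$ applied to the scalars $\psi(\theta_1),\psi(\theta_2)\ge 0$.

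Equivalently, and perhaps more cleanly, I would phrase it as a composition argument: $\zeta = g \circ \psi$ with $g(t) = \tfrac12 t^2$, which is convex and nondecreasing on $[0,\infty)$, the range of $\psi$; hence $g\circ\psi$ is convex as the composition of a nondecreasing convex function with a convex function. Either route avoids differentiability assumptions on $\psi$, which is important since the distance function need not be smooth.

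Finally, since $h(\theta) = \tfrac1N\sum_{i=1}^N \zeta_i(\theta)$ where each $\zeta_i$ is the instance of $\zeta$ with $z = z_i$ and $C = C_i$, convexity of $h$ follows because a nonnegative combination of convex functions is convex; combined with the $1$-smoothness already established, this completes the proof of \cref{thm:smooth-convex}. I do not anticipate a genuine obstacle here — the only point to state carefully is the monotonicity of squaring on $[0,\infty)$, which is exactly what lets us pass from convexity of the distance $\psi$ to convexity of the squared distance $\zeta$ without any extra hypotheses on $C$ beyond convexity.
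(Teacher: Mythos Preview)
Your proposal is correct and matches the paper's own argument essentially verbatim: the paper also writes $\zeta = g\circ\psi$ with $g(t)=\tfrac12 t^2$, notes that $g$ is nondecreasing on $[0,\infty)$ and $\psi$ is convex and nonnegative, and invokes the standard composition rule. Your explicit two-line inequality chain is a welcome spelling-out of that rule, but the underlying idea is identical.
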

\begin{proof}
Consider a function $g(x) = \frac{1}{2} x^2$ and note that $\zeta(\theta) = g(\psi(\theta))$. From~\cref{lemma: psi_convex}, we know that $\psi$ is convex. $g(x)$ is non-decreasing for $x \in \{\mathbb{R}^+ \cup 0\}$ and $\psi(\theta)$ is always non-negative. The composition of two functions is convex if $g$ is non-decreasing and $\psi$ is convex~\citep{boyd2004convex}. Thus, the composite function $\zeta$ is convex. 
\end{proof}

\subsection{Proof for ~\cref{thm:conv_proof}}
\label{proof:conv_proof}
\propconverge*
\begin{proof}
    Loss $h(\theta)$ is defined as:
    \begin{equation}
         h(\theta) = \frac{1}{2N} \sum_{i=1}^N \min_{q_i \in C_i} \| f_{\theta}(z_i) - q_i\|^2
    \end{equation}
where $C_i$ represents the set of feasible cost vectors (defined in~\cref{eq:c-set-def}) for data-point $i$. We assume $f_{\theta}$ is a linear model for this proof for which the function $h(\theta)$ is convex.

In order to better interpret $h(\theta)$, consider a point $c_\theta = (c_1, c_2, \ldots, c_N) \in \R^{Nm}$ such that $c_i = f_\theta(z_i)$. Since $c_\theta \in \mathcal{F}$, $h(\theta) = \frac{d^{2}(c_\theta, \mathcal{C})}{N}$ where $d^{2}(w,\mathcal{W})$ is the squared Euclidean distance of point $w$ to the set $\mathcal{W}$. Hence, minimizing $h(\theta)$ is related to minimizing the distance between the sets $\mathcal{F}$ and $\mathcal{C}$. 

Consequently, our loss can be reformulated as:
\begin{equation}
    h(\theta) = \frac{1}{2N} d(c_\theta, \mathcal{C})^2
\end{equation}

Let us denote $\tilde{\theta} \in \arg\min h(\theta)$ and the predicted point as $\hat{c} := Z \, \tilde{\theta} \in \cF$. Therefore, $h(\tilde{\theta}) = \frac{1}{2N} d(\hat{c}, \mathcal{C})^2$. 

We can prove the proposition by contradiction. Assume, $\hat{c} := Z \, \tilde{\theta} \in \cF$  and is not the closest point to set $\mathcal{C}$. Conversely, assume the closest point to the set $\mathcal{C}$ in $\mathcal{F}$ is given by $Z\, \theta_p$. Now, the loss $h(\theta_p) =\frac{1}{2N} d(Z \theta_p, \,\mathcal{C})^2$ and since, $Z\theta_p$ is the closest point in the set $\mathcal{C}$ in $\mathcal{F}$, this means, that $\theta_p$ is also the $\arg \min$ of $h(\theta)$. As $\tilde{\theta}$ is also the $\arg \min$, this implies that $h(\tilde{\theta}) = h({\theta}_p)$, Thus, $d(Z\tilde{\theta}, \mathcal{C}) = d(Z\theta_p, \mathcal{C})$. This implies that minimizing $h(\theta)$ leads to convergence to a point $\hat{c} = Z \tilde{\theta}$, which is the closest distance to $\mathcal{C}$. In the case where an intersection exists, the closest distance to $\mathcal{C} =0$; therefore, it converges in $\mathcal{F}\cap \mathcal{C}$.


\end{proof}

\clearpage
\subsection{Proof of ~\cref{thm:pocs-pgd}}

\proppocssgd*

\label{proof:pocs-pgd}
\begin{proof}
    Consider iteration $t$, and the current value of model parameters at iteration $t$ is denoted by  $\theta_t$. Let us denote the updated parameter at time $t+1$ as $\theta_{t+1}$(POCS) and $\theta_{t+1}$(ERM) for POCS update and first order pre-conditioned update on $h(\theta)$ respectively.
    
    Let $q_t = \mathcal{P}_{\mathcal{C}}(Z\theta_t)$ denote the projection of $Z\theta_t$.

    In POCS, the $\theta_{t+1} = \arg \min_{\theta} \frac{1}{2N}\|Z\theta-q_t\|^2$. Solving it exactly gives the projection of $q_t$ onto the set $\mathcal{F}$, which can be written as: 
    \begin{equation}
       \mathcal{P}_{F}(q_t) = Z\theta_{t+1}  \; \text{s.t.} \, \theta_{t+1} = (Z^TZ)^{-1}Z^Tq_t
    \end{equation}
    Thus, $\theta_{t+1}\text{(POCS)} = (Z^TZ)^{-1}Z^Tq_t$

    Considering first order update for function $h(\theta)$ with step-size $\eta$ and pre-conditioner $\left[ \frac{Z^TZ}{N} \right]^{-1}$ can be written as: 
    \begin{equation}
        \theta_{t+1} = \theta_t -  \eta \, \left[ \frac{Z^TZ}{N} \right]^{-1} \nabla h(\theta_t) \tag{from definition of preconditioner update}
    \end{equation}
    
    where, $\nabla h(\theta_t) = \frac{1}{N}Z^T(Z\theta_t  - q_t)$.

    Now, putting the values back in preconditioned update, we get the following:
    \begin{align}
        \theta_{t+1}&= \theta_t -  \eta [Z^TZ]^{-1} ( Z^T(Z\theta_t  - q_t))\\
        &= \theta_t(1-\eta) + [Z^TZ]^{-1}  Z^T q_t \\
        &= (Z^TZ)^{-1}  Z^T q_t \tag{for $\eta=1$}
    \end{align}

Thus, $\theta_{t+1}\text{(ERM)} =(Z^TZ)^{-1}  Z^T q_t$. Therefore, we can see that iterates produced by POCS is equivalent to $1$-step of preconditioned gradient update with $\eta=1$.
\end{proof}

\subsection{Proof for ~\cref{thm:pl}}
\label{proof:pl}
\proppl*
We prove that $h(\theta)$ is not necessarily a strongly convex function by contradiction. Let us assume that $h(\theta)$ is $\alpha$-strongly convex function with $\alpha > 0$.  From the definition of $\alpha$-strong convexity, $h(\theta)$ must satisfy this following inequality for all $\theta_1, \theta_2$.

\begin{figure}[h]
    \centering
    \includegraphics[width=0.25\linewidth]{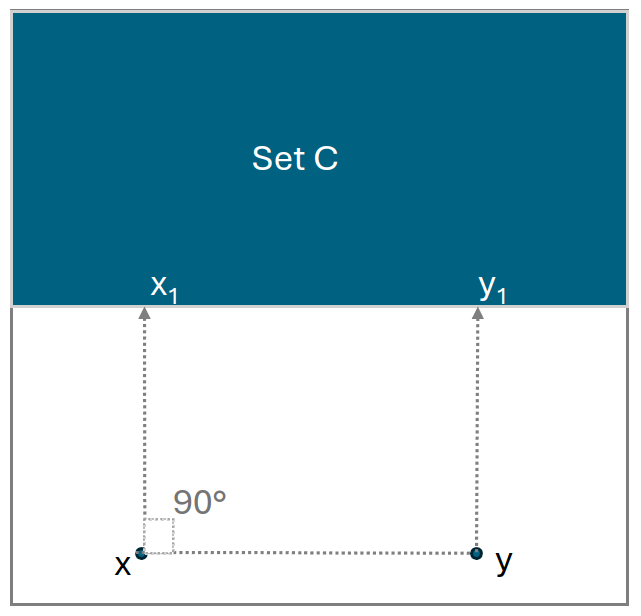}
    \caption{In this figure, we can see two point $x, y$ and their projection onto a linear boundary of set $C$ denoted as $x_1, y_1$ respectively. Moreover, the angle between $x,y$ and $x, x_1$ is the right angle; thus, the two vectors are orthogonal.}
    \label{fig:not_strongly_convex}
\end{figure}
\begin{equation}
    h(\theta_1) \geq h(\theta_2) +  (\nabla h(\theta_2))^T (\theta_1- \theta_2)   + \frac{\alpha}{2} \|  \theta_1- \theta_2 \|^2
    \label{eqn: strong_cvx_cndn}
\end{equation}

Consider a special case where $N = 1$ point and $m = d = 1$ and $z = 1$. Let $y = z\theta_1 = \theta_1$ and $x=z\theta_2 =  \theta_2$. Consider $C$ as an affine set and two points $(x, y)$ equidistant from $C$. Define $x_1, y_1$ to be the projection of $x$ and $y$ onto C respectively (refer to~\cref{fig:not_strongly_convex} above). 

Since $x$ and $y$ are equidistant from $C$, $\norm{x - x_1} = \norm{y - y_1}$. Moreover, since $x$ and $y$ are on the same side of $C$, vector $x - y$ is orthogonal vector $x - x_1$. Hence, $\langle y - x, x - x_1 \rangle = 0$. Substituting these values in ~\cref{eqn: strong_cvx_cndn}, we get: 
\begin{align}
    \frac{1}{2} \|y-y_1\|^2 &\geq  \frac{1}{2} \|x-x_1 \|^2 +\langle x - x_1, y - x \rangle + \frac{\alpha}{2} \|x-y\|^2  \\    
     0 &\geq \frac{\alpha}{2} \|x-y\|^2 \\
     \implies \alpha &=0 
\end{align}
As $\alpha =0$ therefore the function $h(\theta)$ is not strongly convex for this case when $C$ is an affine set. However, we can show that function $h(\theta)$ satisfies the PL inequality with $\mu = \frac{\lambda_{\min} [Z^TZ]}{N}$. Recall,  $h(\theta)$ is defined as $\frac{1}{2N} \|Z\theta - q\|^2$ where $q = \mathcal{P}_{\mathcal{C}}(Z\theta)$ is projection of $Z\theta$ on $\mathcal{C}$. Hence, $\nabla h(\theta) = \frac{1}{N}Z^T [Z\theta - q]$.

\begin{align}
\|\nabla h(\theta)\|^2 &= \frac{1}{N^2} \normsq{Z^T [Z\theta - q]}  \tag{By definition of $\nabla h(\theta)$}\\
& \geq \frac{1}{N^2}\sigma^2_{\min}(Z^T)  \normsq{[Z\theta - q]} \tag{$\norm{Ax} \geq \sigma_{\min}(A) \, \norm{x}$}\\
& =  \frac{1}{N^2}\lambda_{\min} (Z^TZ)  \normsq{[Z\theta - q]} \tag{using $\sigma^2_{\min}(Z^T) = \lambda_{\min} (Z^TZ)$}\\
& = \frac{2}{N}\lambda_{\min} (Z^TZ) \, h(\theta) \tag{By definition of $h$} \\
& \geq \frac{2}{N}\lambda_{\min} (Z^TZ) [h(\theta) - h(\theta^*)] \tag{Since $h$ is non-negative}\\ 
& = \frac{2}{N}\lambda_{\min} \left(\sum_{i = 1}^{N} z_i z_i^\top \right) [h(\theta) - h(\theta^*)] \tag{replacing $Z^TZ$ as $\sum_{i = 1}^{N} z_i z_i^\top$}\\ 
& = 2 \mu \, [h(\theta) - h(\theta^*)] \tag{For $\mu = \lambda_{\min}\left[\frac{\sum_{i = 1}^{N} z_i z_i^\top}{N}\right]$}
\end{align}
Hence, $h(\theta)$ is $\mu$-PL.

\clearpage
\subsection{Sub-optimality proofs}

\propsubopt*
\begin{proof}
\label{proof:subopt}
\begin{align}
\Gamma(\theta, (z, x^*)) &= \left \langle \frac{\mathcal{P}_{C} (c_{\theta})}{\norm{\mathcal{P}_{C} (c_{\theta})}}, \, \hat{x}(c_{\theta}) - x^* \right \rangle \tag{By definition}\\
    & \leq \left \langle \frac{\mathcal{P}_{C} (c_{\theta})}{\delta}, \, \hat{x}(c_{\theta}) - x^* \right \rangle \tag{Since by~\cref{thm:c_hat_min}, $\norm{\mathcal{P}_{C} (c_{\theta})} \geq \delta$} \\
        &\leq \langle \frac{ \mathcal{P}_{C}(c_{\theta})}{\delta} ,\,   \hat{x}(c_{\theta})  -x^*  \rangle + \frac{1}{\delta}\langle c_{\theta},\, x^*  - \hat{x}(c_{\theta}) \rangle \tag{By definition of $\hat{x}(c_\theta)$, $\langle c_{\theta},\, x^* \rangle \geq \langle c_\theta, \hat{x}(c_{\theta})  \rangle$}\\
           &= \frac{1}{\delta}\langle \mathcal{P}_{C} (c_{\theta})  - c_{\theta} , \,  \hat{x}(c_{\theta})  -x^*  \rangle \tag{rearranging terms}\\
           &\leq \frac{1}{\delta}\|\mathcal{P}_{C} (c_{\theta}) - c_{\theta}  \| \| \hat{x}(c_{\theta})  -x^*  \| \tag{Cauchy-Schwartz}\\
           &\leq \frac{1}{\delta}\|\mathcal{P}_{C} (c_{\theta}) - c_{\theta}  \| \sqrt{m} \tag{From assumption that $\forall j, \hat{x}_j, x_j^* \in [0,1]$}\\
           & = \frac{1}{\delta}\sqrt{2h(\theta)} \sqrt{m} \tag{from definition of $h(\theta)$} \\
           &= \frac{\sqrt{2m \, h(\theta)}}{\delta}
\end{align}
\end{proof}

Next, we give the lower-bound on the term $\delta$ which depends on the margin $\margin$ defined in~\cref{sec:practical}.
\begin{restatable}{lemma}{lemmaminc}
\label{thm:c_hat_min}
For $c \in C$ defined using a margin $\margin$, $\norm{c}_{2} \geq \delta := \frac{\margin}{\sqrt{m}} \, \max_{j \in M} \min \left \{1, \min_{p \in B} \frac{1}{|\tau_{pj}|} \right \}$. 
\end{restatable}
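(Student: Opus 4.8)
The plan is to unpack what membership in the margin-augmented set $C$ of~\eqref{margin} forces on $c$, rewrite it as a collection of reduced-cost inequalities, and then read off a lower bound on $\norm{c}_2$ by pairing $c$ with one explicit sparse test direction per inequality. Fix $c \in C$ and let $\lambda,\nu$ be associated dual variables, so $c = \nu^\top A + \lambda$ with $\lambda_i = 0$ on the basis $B = \{i : x^*_i \neq 0\}$ (complementary slackness) and $\lambda_i \ge \margin$ on $M = [m]\setminus B$ (the margin condition). Splitting the stationarity equation into its $B$- and $M$-blocks and using $\lambda_B = 0$, the $B$-block gives $\nu^\top = c_B A_B^{-1}$, which substituted into the $M$-block yields the reduced-cost identity $\lambda_M = c_M - c_B A_B^{-1} A_M = c_M - c_B\tau$, exactly the computation in~\cref{app:kkt_equiv}; here $\tau := A_B^{-1}A_M$, so that $\tau_{pj}$ ($p\in B$, $j\in M$) are the quantities in the statement. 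In particular, for every $j\in M$,
\[
c_j - \sum_{p\in B}c_p\,\tau_{pj} \;=\; \lambda_j \;\ge\; \margin .
\]

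Next, for a fixed $j\in M$ define $u^{(j)}\in\R^m$ by $u^{(j)}_j = 1$, $u^{(j)}_p = -\tau_{pj}$ for $p\in B$, and $u^{(j)}_k = 0$ otherwise, so the displayed inequality reads $\langle c, u^{(j)}\rangle \ge \margin$. By Hölder's inequality ($\ell_1$–$\ell_\infty$ duality), $\margin \le \langle c, u^{(j)}\rangle \le \norm{c}_1\,\norm{u^{(j)}}_\infty$, and by construction $\norm{u^{(j)}}_\infty = \max\{1, \max_{p\in B}|\tau_{pj}|\}$. Combining with $\norm{c}_1 \le \sqrt{m}\,\norm{c}_2$ and rearranging gives
\[
\norm{c}_2 \;\ge\; \frac{\margin}{\sqrt{m}\,\max\{1,\,\max_{p\in B}|\tau_{pj}|\}} \;=\; \frac{\margin}{\sqrt{m}}\,\min\Bigl\{1,\; \min_{p\in B}\frac{1}{|\tau_{pj}|}\Bigr\}.
\]
Since $j\in M$ was arbitrary, taking the maximum of the right-hand side over $j\in M$ yields $\norm{c}_2 \ge \delta$, as claimed.

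I expect the only genuine subtlety to be the degenerate case: when $x^*$ is a degenerate vertex, $B = \{i : x^*_i\neq 0\}$ need not index an invertible submatrix $A_B$, so the elimination of $\nu$ in the first step must instead be carried out with respect to a genuine optimal basis $\hat B \supseteq B$, carrying along the (possibly nonzero) entries of $\lambda$ on the degenerate basic coordinates — precisely the reduced-cost relation that~\cref{app:kkt_equiv} establishes in that regime. With $\tau$ and $B$ read off from such a basis (which is how $\tau_{pj}$ is defined anyway), the second and third steps go through verbatim. Everything after the reduced-cost identity is routine: the Hölder estimate, the norm comparison $\norm{\cdot}_1 \le \sqrt{m}\norm{\cdot}_2$, and the final maximization over $j\in M$ are all immediate.
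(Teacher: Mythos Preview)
Your proposal is correct and follows essentially the same route as the paper's proof: derive the reduced-cost inequalities $c_j - \sum_{p\in B} c_p\,\tau_{pj} \ge \margin$ from the KKT/margin conditions (as in~\cref{app:kkt_equiv}), apply H\"older with the test vector $u^{(j)}$ (the paper's $\alpha_j$) to lower-bound $\norm{c}_1$, convert to $\norm{c}_2$ via $\norm{c}_1 \le \sqrt{m}\,\norm{c}_2$, and maximize over $j\in M$. The only cosmetic differences are that the paper packages the H\"older step as a preliminary contradiction lemma and writes $A_B^{\dagger}$ (pseudo-inverse) in place of $A_B^{-1}$ to absorb the degenerate case you flag explicitly.
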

\begin{proof}
\label{proof:minc}
We lower-bound $\norm{c}_{2}$ for an arbitrary $c \in C$ using the reduced cost optimality conditions~\citep{luenberger1984linear}. For this, we denote $B$ as the basis set defined as $B := \{ j \mid x^*_j > 0 \}$ and $M$ as the set of indices not in $B$, i.e. $M = [m] - B$. Let us define a new term $\tau_{ij} = [(A_B)^{\dagger}A_j]_i$ where $A_B$ represents the columns corresponding to the indices in set $B$, and $A_B^{\dagger}$ is the pseudo-inverse of the matrix $A_B$. 

To prove this proposition, we first consider two arbitrary vectors $a, b \in \mathbb{R}^{m}$ and show that $\|a \|_1 \geq \frac{\margin}{\max_{p=1}^n|b_p|}$ is a \emph{necessary condition} to ensure that $a^T b \geq \margin$. We do this by contradiction: assume $\|a \|_1 \leq \frac{\margin}{\max_{p=1}^m|b_p|}$, but $a^T b \geq \margin$. In this case, 
\begin{align*}
\|a \|_1 \, \max_{p=1}^m|b_p| & \leq \margin \implies \|a\|_{1} \, \|b\|_{\infty} \leq \margin \\
\intertext{By Holders inequality, since $a^T b \leq \|a\|_{1} \, \|b\|_{\infty}$, the above inequality implies that}
a^T b \leq \margin \,,
\end{align*}
which is a contradiction. Since $\|a \|_1 \geq \frac{\margin}{\max_{p=1}^m|b_p|}$, it gives us a lower-bound on $\|a\|_{1}$. Now, we can use this result to find the lower bound on $\|c\|_{1}$. 

In~\cref{app:kkt_equiv}, we have shown that KKT conditions are equivalent to reduced cost optimality conditions. We first find the lower bound to satisfy reduced cost inequality for a specific index $j \in [M]$ and then extend the result for all indices in $M$ to find the lower-bound on $\|c\|_1$.

The reduced cost for an index $j \in M$ and margin $\margin$ is given by $r(j) := c_j - c_B (A_B)^{\dagger} A_j$. The reduced costs conditions imply that for all $j \in M$, $r(j) \geq \margin$. In terms of $\tau$, these conditions imply that $c_j - \sum_{p \in B} c_p\tau_{pj} \geq \margin$. Equivalently, for all $j \in M$, $c^T\alpha_j \geq \margin$, where $\alpha_j$ represents the coefficients of $c$ in $r(j)$. 

Using the above result to obtain a lower-bound on $\|c\|_{1}$, we have that, 
\begin{align}
    \|c\|_1 &\geq \frac{\margin}{\max_{p=1}^m |(\alpha_j)_p|} \\  
    &= \frac{\margin}{\max\{1, \max_{p \in B} |\tau_{pj}|\}} \tag{substituting the value of $\alpha_j$} \\
    &= \margin \, \min \left \{1, \min_{p\in B} \frac{1}{|\tau_{pj}|} \right \} \tag{rearranging terms}
\end{align}

Since we require that the reduced cost condition be satisfied for all $j \in M$, we get that, 
\begin{align*}
\|c\|_{1} \geq \margin \, \max_{j \in M} \min \left \{1, \min_{p\in B} \frac{1}{|\tau_{pj}|} \right \}
\end{align*}

Finally, we use the relation between norms to lower-bound the value of $\|c\|_{2}$.
\begin{align}
    \delta &:= \min_{c \in C} \|c\|_2  \\
           &\geq \min_{c \in C} \frac{1}{\sqrt{m}} \|c\|_1     \tag{by norm inequality}\\           
           &\geq \frac{\margin}{\sqrt{m}} \, \max_{j \in M} \min \left \{1, \min_{p \in B} \frac{1}{|\tau_{pj}|} \right \}
\end{align}
\end{proof}

\section{Synthetic Experimental Results}
\label{app:synthetic-experiments}


We conduct numerical experiments for two  LP problems – the shortest path (SP)  problem and the fractional Knapsack problems considered in ~\cite {sun2023maximum}. For both SP and Knapsack, we generate $100$ samples for training, validation and test sets. We used the codebase provided by the \cite{sun2023maximum} to generate the dataset.

\textbf{Shortest Path (SP-synth)}: The Shortest Path problem is defined on a $5 \times 5$ grid with $m=40$ directed edges associated with the ground truth cost-vector $c^* \in \mathbb{R}^m$. Input $z \in \mathbb{R}^d$ with $d=6$. Thus, $\theta \in \mathbb{R}^{d \times m}$. To make the problem harder, we use the degree$=4$ in the data-generation process. 

\textbf{Fractional Knapsack}: The Fractional Knapsack problem is defined with input $z \in \mathbb{R}^d$ with $d=5$. We have $10$ items with associated cost-vectors, and slack variables are added to convert the problem to standard form, making the dimension $m=21$. Thus, $\theta \in \mathbb{R}^{d \times m}$. To make the problem harder, we use the degree$=2$ in the data-generation process with the attacking noise of attack-power$=3.0$. 

\textbf{Methods and model}: For the experiments, we compare both our variants, POCS (\cref{alg:revgrad}) and ERM (\cref{sec:erm}) with GD and show that they have similar performance. We compare our method against against several existing methods, including ST~\cite{sahoo2022backpropagation}, MOM \cite{sun2023maximum}, BB \cite{vlastelica2019differentiation},  QPTL ~\citet{wilder2019melding} and SPO+ \cite{elmachtoub2022smart}.
We train all the models in a deterministic setting employing a linear model. 

\textbf{Training Details}
For Ours (POCS), we solve the regression problem using closed-form solutions obtained through matrix inversion. For the MOM, Ours (ERM) method, we employ the Armijo line search algorithm ~\citep{armijo1966} with Gradient Descent.  For the  BB, ST, QPTL, and SPO+ methods, a grid search is used to find the best constant step-size in $\{10, 1, 0.1, 0.01, 0.001, 0.0001\}$.  We also did a grid search for $\lambda$,  regularizer in QPTL and the perturbation weight in BB with $\lambda \in \{100, 10, 1, 0.1, 0.01, 0.001 \}$. The optimal settings are determined based on performance on the validation set, and we plot the training and test plot for the best-performing model.

\begin{figure}[htbp]
\vspace{2ex}
    \centering
    \begin{minipage}{0.48\textwidth}
\includegraphics[width=\linewidth]{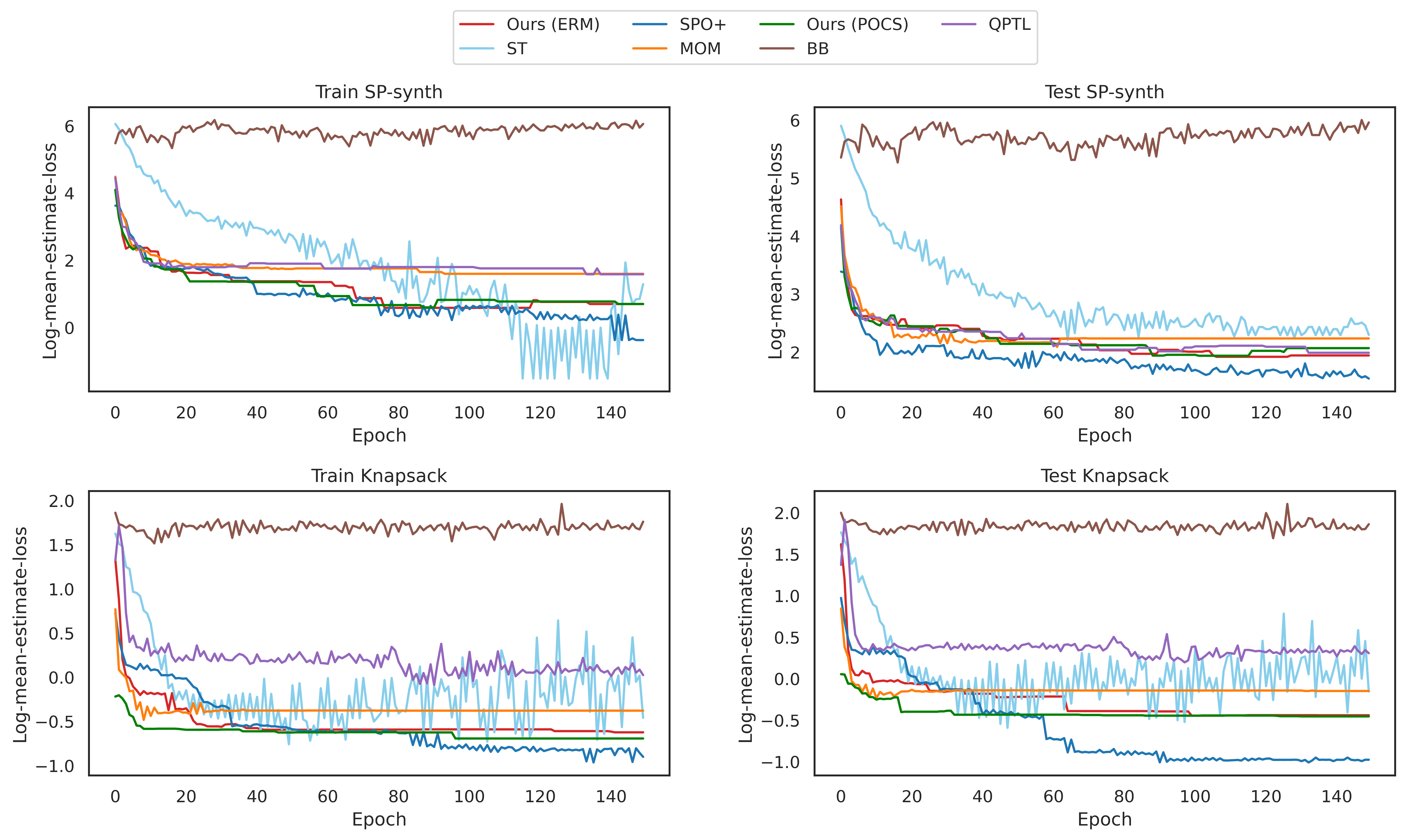}
\caption{Estimate loss}
\label{fig:result_gen}
 \end{minipage}\hfill
    \begin{minipage}{0.48\textwidth}
    \centering
\includegraphics[width=\linewidth]{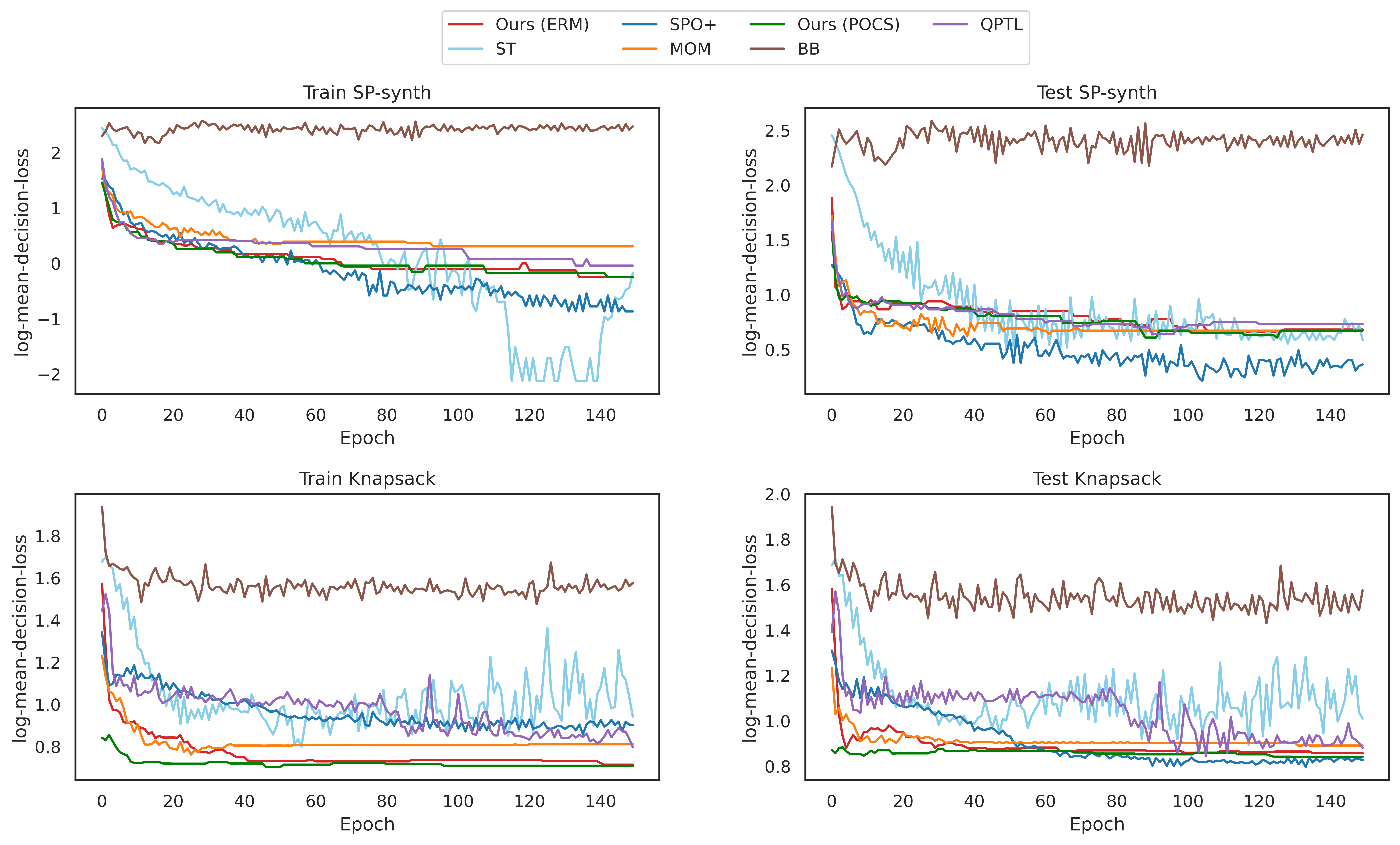}
\caption{Decision loss}
\label{fig:result_gen_dec}
\end{minipage}
    \caption{: Training and Test plot for synthetic tasks. For both problems, our method significantly outperforms the other methods (ST, QPTL, BB, MOM) and is comparable to SPO+, which uses the knowledge of $c^*$}
    \label{fig:side_by_side_minipage}
\end{figure}

\textbf{Results}:
In~\cref{fig:result_gen_dec} w.r.t to the decision-loss, all the methods have similar performance except SPO+, BB. For Knapsack, our method outperforms all the other baselines except SPO+.
In~\cref{fig:result_gen}, w.r.t to the estimate-loss,  our method significantly outperforms the other methods (ST, QPTL, BB, MOM) and is comparable to SPO+, which uses the knowledge of $c^*$. Moreover, we can see that both our variants, POCS and ERM, have negligible performance differences.


\section{Additional real-world experiment details}
\label{app:experiments-details}

\subsection{Warcraft shortest Path}
\label{app: sp_wc}

In this section, we define the LP for the shortest path problem. Consider $ x_{ij} $ represents the edge from vertex $ i $ to vertex $ j $. Since all edges are bidirectional, $ x_{ij} $ is not the same as $ x_{ji} $.  $ s $ and $ t $ denote source and target vertices, respectively. Let $ c_{ij} $ be the cost of selecting edge $ x_{ij}$. Before initializing the LP, let $ N(i) $ be the set of vertices with an outgoing edge from vertex $ i $, and $ I(i) $ be the set of vertices with an incoming edge to vertex $ i $. The LP can be written as follows:

\begin{align*}
\underset{x}{\text{minimize}} \qquad & c^T x \\
\text{subject to} \qquad
\forall i \notin \{s, t\}, \; \sum_{j \in N(i)} x_{ij} & = \sum_{j \in I(i)} x_{ji} \quad \text{(flow conservation)}\\
\sum_j x_{sj} & = 1 \quad \text{(source has one outgoing edge)} \\
\sum_j x_{jt} & = 1 \quad \text{(target has one incoming edge)} \\
x & \geq 0 
\label{eq: sp_lp}
\end{align*}

In this case, the source is always in the top-left grid, and the target is in the bottom-right grid. As in the dataset, ground-truth weights are defined on the vertex. To run it as LP defined in~\cref{eq: sp_lp}, we directly predict the weights of the edges. Given that the ground-truth cost $ c^*(BB) $ is defined for vertex weights in the dataset, we define $ c^* $ for the edge-weighted shortest path as:
\begin{align}
\forall i, \; \forall j \in N(i), \; c^*_{ij} = c^*_i (BB)
\end{align}
Both approaches yield the same shortest path, validating the conversion. Please see \cite{vlastelica2019differentiation} for more details regarding the dataset.

\begin{figure}
    \centering
    \includegraphics[width=1\linewidth]{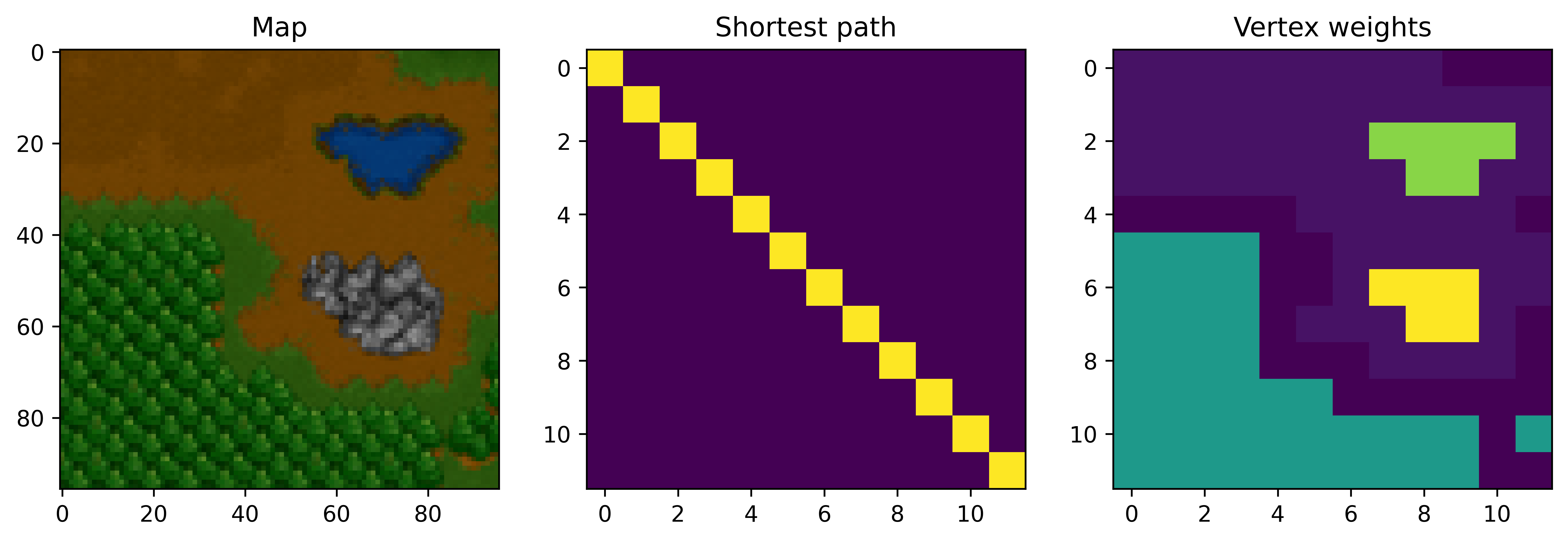}
    \caption{Warcraft SP dataset sample: The input image (left), ground truth shortest path (center), and the ground truth vertex weights (right). The task is to learn the edge weights to retrieve the same shortest path.}
    \label{fig:warcraft-example}
    \vspace{4ex}
\end{figure}

\subsection{Perfect Matching}

 
\label{app: pm_lp}
 In this section, we define the LP associated with the perfect matching problem. $x_{ij}$ represents the edge from vertex $i$ to vertex $j$. $N(i)$ represents the neighbors of vertex $i$. All the edges in the graph are unidirectional, i.e. $x_{ij}$ and $x_{ji}$ represent the same edge. $c_{ij}$ represent the cost of selecting the edge $x_{ij}$. Thus, the LP can be written as:

 \begin{align*}
\underset{c}{\text{minimize}}\qquad & c^Tx  \\
\text{subject to}\qquad 
    & \forall  i  \; \sum_{j \in N(i)} x_{ij} =1 \tag{each vertex should have exactly one incident edge}\\ 
    & x \geq 0 
\end{align*}
\begin{figure}[h]
    \centering
    \includegraphics[width=0.75\linewidth]{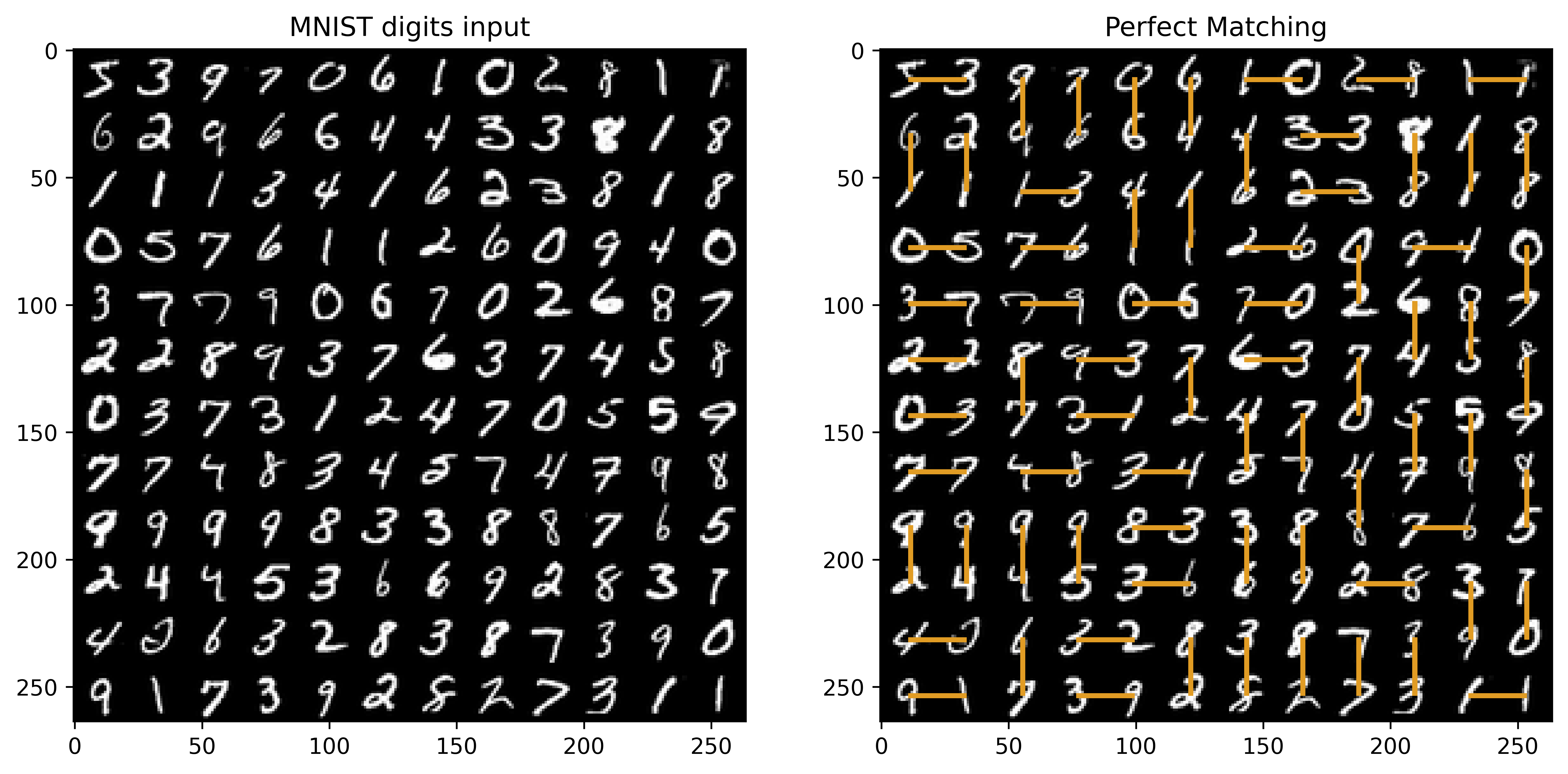}
    \caption{Perfect Matching dataset sample: This figure shows the input image (left) and the corresponding min-cost perfect matching overlayed on the input image on the right. Each input is a $12 \times 12$ grid, with each grid containing an MNIST digit. In Perfect Matching(PM), edges highlighted by the orange lines represent the edge selected by the solving min-cost perfect matching optimization problem. Ground truth edge weights are inferred by reading the digits connected by the edge as a two-digit number. The task is to predict edge weights such that we get the same PM.}
    \label{fig:pm-dataset}
\end{figure}

In our case, ground-truth edge weights are inferred by reading the digits on the two ends of the vertex as two-digit numbers. Please see \cite{vlastelica2019differentiation} for more details regarding the dataset.
\vspace{1ex}
\subsection{Runtime Comparison}
\label{app:runtime}
\begin{figure}[!h]
\begin{center}
\centerline{\includegraphics[scale=0.4]{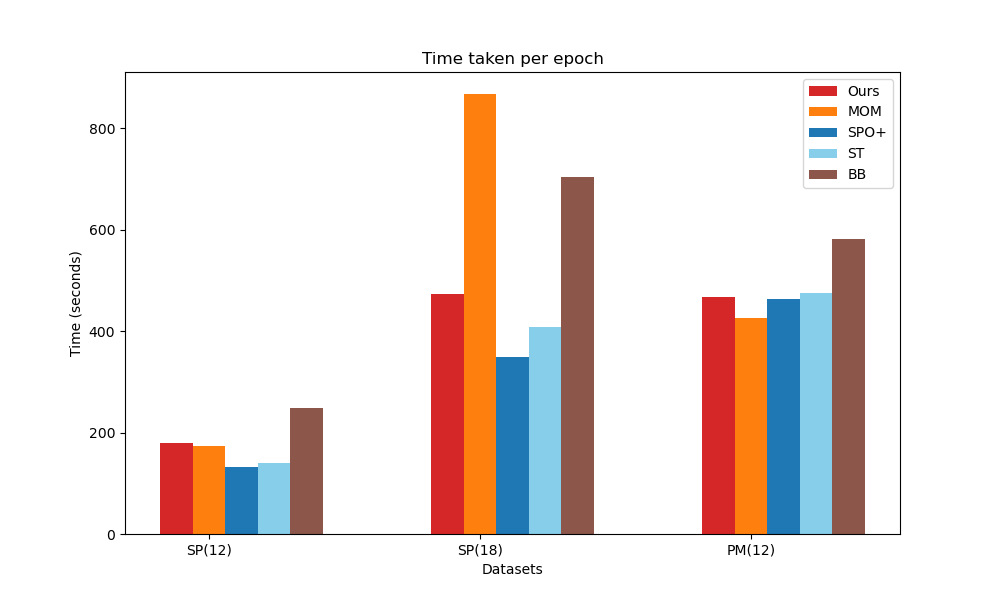}}
\caption{Training Time (in seconds) per epoch vs method for three real-world experiments. We can see that our method is comparable to other methods and  scales well with the dimension of the problem}
\label{fig:bar_plot}
\end{center}
\end{figure}

To benchmark the computational efficiency of our method, we plot the average training time per epoch for all methods in~\cref{fig:bar_plot}. Our method is competitive with ST and SPO+, and faster than MOM and BB. BB requires two solver calls per gradient evaluation, while MOM, despite not needing solver calls, involves inverting a matrix $ A_B $, which scales poorly with dimension. As shown in the plot, our method scales well with both the problem dimension and dataset size.


\subsection{Ablation study for margin}
\label{app:margin}
In order to verify the robustness of~\cref{alg:revgrad} for varying $\margin$, we do an ablation study varying $\margin$ from [10, 0.01] for the synthetic datasets in the deterministic setting (refer to ~\cref{app:synthetic-experiments} for details). We train each model for $150$ epochs according to \cref{alg:revgrad} and report the mean decision error (\cref{eqn:decision-loss}) for both the train/test data for the synthetic shortest-path (SP) and Knapsack (K) problems. We see that increasing the margin (from $0.01$ to $1$) leads to small sub-optimality and that increasing it beyond $1$ does not improve the performance.

\begin{table*}[!h]
\centering
\begin{tabular}{@{}rcccc@{}}
\toprule
Margin & Train Sp-synth & Test Sp-synth &  Train Knapsack & Test Knapsack\\ 
\midrule
  10 & 0.779 & 1.95 & 2.033 & 2.32 \\ 
 1 & 0.779 & 1.89 & 2.033 & 2.32 \\ 
 0.1 & 0.699 & 2.11 & 2.08 & 2.39 \\ 
 0.01 & 2.63 & 3.23 & 2.11 & 2.32 \\ 
\bottomrule
\end{tabular}
\caption{Ablation results varing margin ($\margin$) from $10$ to $0.01$ and their performance on train/test set for the synthetic dataset }
\label{tab:margin-ablation}
\end{table*}

\end{document}